\newif\ifSHORTVERSION
\SHORTVERSIONfalse

\ifSHORTVERSION
  \documentclass[letterpaper,10pt,conference]{ieeeconf}
  \IEEEoverridecommandlockouts
\else
  \documentclass[journal,10pt,twoside]{IEEEtran}
\fi

\usepackage[T1]{fontenc}
\usepackage[utf8]{inputenc}

\ifSHORTVERSION
  \usepackage{amsmath,amssymb,amsfonts,mathtools}

  \usepackage{amsthm}
\else
  \usepackage{amsmath,amssymb,amsfonts,amsthm,mathtools}
\fi

\usepackage{newtxtext,newtxmath}
\usepackage{microtype}
\usepackage{bm}

\usepackage{xcolor}
\usepackage{tikz}

\usetikzlibrary{
  arrows.meta,
  positioning,
  calc,
  fit,
  backgrounds
}

\let\labelindent\relax
\usepackage{enumitem}

\usepackage{aliascnt}

\ifSHORTVERSION
\else
  \usepackage[
    a4paper,
    left=18mm,
    right=18mm,
    top=24mm,
    bottom=22mm,
    columnsep=6mm
  ]{geometry}
\fi

\newtheorem{definition}{Definition}

\newaliascnt{proposition}{definition}
\newtheorem{proposition}[proposition]{Proposition}
\aliascntresetthe{proposition}

\newaliascnt{theorem}{definition}
\newtheorem{theorem}[theorem]{Theorem}
\aliascntresetthe{theorem}

\newaliascnt{corollary}{definition}
\newtheorem{corollary}[corollary]{Corollary}
\aliascntresetthe{corollary}

\newaliascnt{assumption}{definition}
\newtheorem{assumption}[assumption]{Assumption}
\aliascntresetthe{assumption}

\theoremstyle{remark}
\newaliascnt{remark}{definition}
\newtheorem{remark}[remark]{Remark}
\aliascntresetthe{remark}

\theoremstyle{remark}
\newaliascnt{example}{definition}
\newtheorem{example}[example]{Example}
\aliascntresetthe{example}

\usepackage{graphicx}
\graphicspath{{figures/}{figs/}}

\usepackage{booktabs}
\usepackage{array}
\usepackage{pgfplots}

\pgfplotsset{compat=1.18}

\usetikzlibrary{
  arrows.meta,
  calc,
  positioning,
  decorations.pathreplacing
}

\ifSHORTVERSION
\else
  \usepackage[font=footnotesize]{caption}
  \usepackage[font=footnotesize]{subcaption}
\fi

\usepackage{cite}
\usepackage[hidelinks]{hyperref}
\usepackage[nameinlink,noabbrev]{cleveref}

\crefformat{equation}{#2(#1)#3}
\Crefformat{equation}{Equation~#2(#1)#3}

\crefname{figure}{Fig.}{Figs.}
\Crefname{figure}{Figure}{Figures}

\crefname{table}{Table}{Tables}
\Crefname{table}{Table}{Tables}

\crefname{section}{Sec.}{Secs.}
\Crefname{section}{Section}{Sections}

\crefname{subsection}{Sec.}{Secs.}
\Crefname{subsection}{Section}{Sections}

\crefname{definition}{Definition}{Definitions}
\Crefname{definition}{Definition}{Definitions}

\crefname{proposition}{Proposition}{Propositions}
\Crefname{proposition}{Proposition}{Propositions}

\crefname{theorem}{Theorem}{Theorems}
\Crefname{theorem}{Theorem}{Theorems}

\crefname{corollary}{Corollary}{Corollaries}
\Crefname{corollary}{Corollary}{Corollaries}

\crefname{assumption}{Assumption}{Assumptions}
\Crefname{assumption}{Assumption}{Assumptions}

\crefname{remark}{Remark}{Remarks}
\Crefname{remark}{Remark}{Remarks}
\crefname{example}{Example}{Examples}
\Crefname{example}{Example}{Examples}

\ifSHORTVERSION
\else
  \setlist[enumerate]{
    leftmargin=*,
    itemsep=0.3ex,
    topsep=0.5ex,
    parsep=0pt
  }

  \usepackage{titlesec}

  \renewcommand{\thesection}{\arabic{section}}
  \renewcommand{\thesubsection}
    {\thesection.\arabic{subsection}}
  \renewcommand{\thesubsubsection}
    {\thesubsection.\arabic{subsubsection}}

  \titleformat{\section}
    {\normalfont\large\bfseries}
    {\thesection}{0.65em}{}

  \titleformat{\subsection}
    {\normalfont\normalsize\bfseries}
    {\thesubsection}{0.65em}{}

  \titleformat{\subsubsection}
    {\normalfont\normalsize\itshape}
    {\thesubsubsection}{0.65em}{}

  \titlespacing*{\section}
    {0pt}{2.0ex plus 0.5ex minus 0.2ex}{0.8ex}

  \titlespacing*{\subsection}
    {0pt}{1.5ex plus 0.4ex minus 0.2ex}{0.5ex}

  \titlespacing*{\subsubsection}
    {0pt}{1.2ex plus 0.3ex minus 0.2ex}{0.4ex}

\fi

\newif\ifanonymous
\anonymousfalse

\newcommand{\papertitle}{%
\huge Emergent Underactuation in Fully Actuated Multirotors by Minimizing a Homogeneous Allocation Cost}

\newcommand{\PaperAuthorsAffiliated}{%
Antonio Franchi\textsuperscript{1,2}
 and Barbara Bazzana\textsuperscript{1}%
}

\newcommand{\AffiliationOne}{%
\textsuperscript{1}Robotics and Mechatronics,
Faculty of Electrical Engineering, Mathematics and Computer Science,
University of Twente, Enschede, The Netherlands.%
}

\newcommand{\AffiliationTwo}{%
\textsuperscript{2}Department of Computer, Control and Management
Engineering, Sapienza University of Rome, Rome, Italy.%
}

\newcommand{\CorrespondingAuthor}{%
\textit{Corresponding author:} Antonio Franchi,
\href{mailto:schol@r-franchi.eu}{schol@r-franchi.eu}.%
}

\newcommand{\FundingStatement}{%
This work was partially funded by the Horizon Europe
research project AUTOASSESS under grant agreement
No.~101120732.%
}

\newcommand{\PaperAbstract}{%
Full actuation is commonly valued for independent position and attitude
tracking. This work establishes a complementary principle: when only a
position trajectory is prescribed, minimizing a positively homogeneous
allocation cost selects underactuated-like behavior on a fully actuated
multirotor. This cost class includes norms and power laws, including
standard minimum-norm pseudoinverse allocation. We use fiberwise
infimization to transport the actuator-output cost to wrench space,
yielding a homogeneous infimal allocation cost. Restriction to unit
zero-moment forces defines a directional landscape whose minimizing
directions are independent of force magnitude. We show that aligning
any fixed minimizing direction with the required world-frame force
minimizes instantaneous and integrated zero-moment allocation costs. The
resulting body force remains in a fixed one-dimensional subspace without
transverse components. We derive an exact optimality-gap decomposition
quantifying the additional cost of every other attitude trajectory and
characterizing the minimum-cost class. When a disturbance or
prescribed-motion change modifies the required-force direction, a fully
actuated platform can immediately generate the modified force using
transverse authority and reorient toward the new minimum-cost family. A
structurally underactuated platform must reorient before generating that
force. Thus, homogeneous allocation-cost minimization explains why
optimal nominal motion can be underactuated-like, while full actuation
remains valuable for disturbance rejection and transient force
generation.
}

\newcommand{\PaperKeywords}{%
Control allocation, emergent underactuation, fully actuated multirotors,
homogeneous allocation costs%
}

\title{\papertitle}

\ifSHORTVERSION
  \ifanonymous
    \author{Anonymous Author}
  \else
    \author{\PaperAuthorsAffiliated%
        \thanks{\AffiliationOne}%
        \thanks{\AffiliationTwo}%
        \thanks{\FundingStatement}%
    }
  \fi
\else
  \ifanonymous
    \author{Anonymous Author}
  \else
    \author{\PaperAuthorsAffiliated}
  \fi
\fi

\ifanonymous
  \hypersetup{
    pdftitle={\papertitle},
    pdfauthor={Anonymous Author}
  }
\else
  \hypersetup{
    pdftitle={\papertitle},
    pdfauthor={Antonio Franchi}
  }
\fi

\ifSHORTVERSION
  \makeatletter
  \@ifundefined{keywords}{%
    \newenvironment{keywords}
      {\par\smallskip\noindent\textbf{Keywords:}\ }
      {\par\smallskip}
  }{}
  \makeatother
\fi

\ifSHORTVERSION
\else
  \makeatletter
  \newcommand{\MakeNeutralTitle}{%
    \twocolumn[{%
      \begin{@twocolumnfalse}
        \vspace*{-1.5em}

        {\raggedright
         \fontsize{17}{20}\selectfont
         \bfseries
         \papertitle\par}

        \vspace{0.9em}

        \ifanonymous
          {\raggedright
           \normalsize
           \textbf{Anonymous Author}\par}
        \else
          {\raggedright
           \normalsize
           \textbf{\PaperAuthorsAffiliated}\par}

          \vspace{0.35em}

          {\raggedright
           \small
           \AffiliationOne\par
           \AffiliationTwo\par
           \CorrespondingAuthor\par
           \FundingStatement\par}
        \fi

        \vspace{1.1em}

        \noindent
        {\small\bfseries Abstract}\par

        \vspace{0.25em}

        {\small\noindent
         \PaperAbstract\par}

        \vspace{0.8em}

        {\small\noindent
         \textbf{Keywords:}
         \PaperKeywords.\par}

        \vspace{1.6em}
      \end{@twocolumnfalse}
    }]%
  }
  \makeatother
\fi

\newcommand{\RealNumbers}{\mathbb{R}}
\newcommand{\NonnegativeReals}{\mathbb{R}_{\geq 0}}
\newcommand{\PositiveReals}{\mathbb{R}_{>0}}
\newcommand{\RotationGroup}{\mathrm{SO}(3)}
\newcommand{\UnitSphere}{\mathbb{S}^{2}}
\newcommand{\ActuatorStateSpace}{\mathcal{V}}
\newcommand{\ActuatorOutputSpace}{\mathcal{X}}
\newcommand{\WrenchSpace}{\mathcal{W}}
\newcommand{\GenericCone}{\mathcal{C}}
\newcommand{\actuatorCount}{n}
\newcommand{\wrenchDimension}{m}
\newcommand{\fullWrenchDimension}{6}
\newcommand{\spatialDimension}{3}
\newcommand{\timeHorizon}{T}
\newcommand{\timeCoordinate}{t}
\newcommand{\homogeneityDegree}{q}
\newcommand{\positiveScale}{\lambda}
\newcommand{\normOrder}{p}
\newcommand{\zeroScalar}{0}

\newcommand{\zeroMomentEmbedding}{\iota_{0}}

\newcommand{\genericFunction}{g}
\newcommand{\genericArgument}{z}

\newcommand{\actuatorStateArray}{\bm{v}}
\newcommand{\actuatorStateComponent}[1]{v_{#1}}
\newcommand{\actuatorOutputArray}{\bm{x}}
\newcommand{\actuatorOutputComponent}[1]{x_{#1}}
\newcommand{\wrenchCoordinates}{\bm{w}}
\newcommand{\forceCoordinates}{\bm{F}}
\newcommand{\momentCoordinates}{\bm{\tau}}
\newcommand{\zeroForceCoordinates}{\bm{0}_{3}}
\newcommand{\zeroMomentCoordinates}{\bm{0}_{3}}
\newcommand{\onesArray}{\bm{1}_{n}}
\newcommand{\actuatorIndex}{i}

\newcommand{\actuatorMap}{q}
\newcommand{\allocationMap}{a}
\newcommand{\physicalTaskMap}{f}
\newcommand{\allocationMatrix}{A}
\newcommand{\forceMatrix}{A_{F}}
\newcommand{\momentMatrix}{A_{\tau}}
\newcommand{\matrixRank}{\operatorname{rank}}
\newcommand{\linearSpan}{\operatorname{span}}
\newcommand{\allocationFiber}{\mathcal{F}}
\newcommand{\fiberAt}[1]{\allocationFiber_{#1}}
\newcommand{\actuatorCost}{c}
\newcommand{\wrenchValue}{\Gamma}
\newcommand{\forceValue}{\gamma}
\newcommand{\directionalValue}{\bar{\gamma}}
\newcommand{\minimumDirectionalValue}{\gamma_{\star}}
\newcommand{\minimumDirectionSet}{\mathcal{U}_{\star}}
\newcommand{\minimumOutputSet}{\mathcal{X}_{\star}}
\newcommand{\infimum}{\operatorname*{inf}}
\newcommand{\minimum}{\operatorname*{min}}
\newcommand{\argminimum}{\operatorname*{arg\,min}}

\newcommand{\actuatorWeightMatrix}{W}
\newcommand{\actuatorCostWeight}[1]{\alpha_{#1}}
\newcommand{\actuatorPowerCoefficient}[1]{\kappa_{#1}}
\newcommand{\absoluteValue}[1]{\left\lvert #1\right\rvert}
\newcommand{\vectorNorm}[1]{\left\lVert #1\right\rVert}
\newcommand{\vectorNormOrder}[2]{\left\lVert #1\right\rVert_{#2}}
\newcommand{\transpose}{\mathsf{T}}

\newcommand{\WorldFrame}{\mathcal{F}_{W}}
\newcommand{\BodyFrame}{\mathcal{F}_{B}}
\newcommand{\attitude}{R}
\newcommand{\optimalAttitude}{R_{\star}}
\newcommand{\optimalAttitudeSet}{\mathcal{R}_{\star}}
\newcommand{\positionTrajectory}{\bm{p}}
\newcommand{\vehicleMass}{m_{v}}
\newcommand{\gravityAcceleration}{g}
\newcommand{\worldVertical}{\bm{e}_{3}^{W}}
\newcommand{\worldVerticalCoordinates}{\begin{bmatrix}0&0&1\end{bmatrix}^{\transpose}}
\newcommand{\worldAxisOne}{\bm{e}_{1}^{W}}
\newcommand{\bodyAxisOne}{\bm{e}_{1}^{B}}
\newcommand{\bodyAxisThree}{\bm{e}_{3}^{B}}
\newcommand{\requiredWorldForce}{\bm{F}_{d}^{W}}
\newcommand{\requiredBodyForce}{\bm{F}_{d}^{B}}
\newcommand{\generatedWorldForce}{\bm{F}_{p}^{W}}
\newcommand{\requiredForceMagnitude}{r}
\newcommand{\requiredForceDirection}{\bm{n}}
\newcommand{\unitForceDirection}{\bm{u}}
\newcommand{\minimumForceDirection}{\bm{a}_{\star}}
\newcommand{\forceProjectorComplement}{I_{3}-\minimumForceDirection\minimumForceDirection^{\transpose}}
\newcommand{\horizontalAttitude}{R_{\mathrm{H}}}

\newcommand{\trajectoryCostOf}[2]{J[#1;#2]}
\newcommand{\minimumTrajectoryCostOf}[1]{J_{\star}[#1]}
\newcommand{\instantaneousGapOf}[1]{\delta_{#1}}
\newcommand{\accumulatedGapOf}[2]{\Delta J[#1;#2]}
\newcommand{\differentialTime}{\mathrm{d}t}
\newcommand{\AttitudeTrajectorySpace}{\mathcal{R}_{\positionTrajectory}}
\newcommand{\attitudeOne}{R_{1}}
\newcommand{\attitudeTwo}{R_{2}}
\newcommand{\attitudePreorder}{\preceq_{\positionTrajectory}}
\newcommand{\attitudeCostEquivalence}{\sim_{\positionTrajectory}}
\newcommand{\OptimalAttitudeTrajectorySet}{\mathcal{R}_{\star,\positionTrajectory}}
\newcommand{\AttitudeQuotientSpace}{\AttitudeTrajectorySpace/\attitudeCostEquivalence}
\newcommand{\relativeGapOf}[2]{\eta[#1;#2]}

\newcommand{\attitudeClassOrder}{\leq_{\mathrm{c}}}

\newcommand{\horizontalPosition}{s}
\newcommand{\constantAltitude}{h}
\newcommand{\horizontalAcceleration}{\ddot{s}}

\newcommand{\externalWorldForce}{\bm{F}_{e}^{W}}
\newcommand{\disturbedRequiredWorldForce}{\bm{F}_{d,e}^{W}}
\newcommand{\disturbedForceMagnitude}{r_{e}}
\newcommand{\disturbedForceDirection}{\bm{n}_{e}}
\newcommand{\disturbedOptimalAttitudeSet}{\mathcal{R}_{\star,e}}
\newcommand{\fixedAttitudeGap}{\delta_{e,R}}

\begin{document}

\ifSHORTVERSION

  \maketitle

  \begin{abstract}
  \PaperAbstract
  \end{abstract}

  \begin{keywords}
  \PaperKeywords
  \end{keywords}

\else

  \pagestyle{plain}

  \bstctlcite{BSTcontrol}

  \MakeNeutralTitle

\fi

\section{Introduction}
\label{sec:introduction}

Conventional parallel-axis multirotors generate force along one
body-fixed direction and must reorient it to accelerate laterally.
Fully actuated multirotors remove this structural coupling by generating
a complete body wrench independently of attitude, enabling independent
pose control and physical interaction
\cite{Ryll2015TCST,Rashad2020Review}. Representative architectures use
bidirectional spatial thrust arrangements
\cite{Brescianini2016ICRA,Park2018TMECH}, unidirectional tilted
propellers \cite{Tognon2018RAL}, active tilting mechanisms
\cite{Kamel2018RAM,Allenspach2020IJRR}, or reconfigurable structures
\cite{Zhao2018RAL,Aboudorra2024JINT}. This actuation freedom is
particularly valuable in aerial manipulation and contact inspection,
where force application cannot generally be subordinated to attitude
motion \cite{Ruggiero2018RAL,SanchezCuevas2020Sensors}.

In these platforms the control-allocation layer distributes a requested wrench among the
available actuators. Redundant actuation supports constraints, failure
accommodation, and secondary optimization
\cite{bodson2002,johansen2013}. For multirotors, allocation geometry
determines force--moment coupling and failure tolerance
\cite{Michieletto2018TRO}, while optimal allocation for prescribed
wrenches has also been considered \cite{Dyer2019ICRA}. Standard
homogeneous costs include norms and power laws, notably the
minimum-norm cost underlying pseudoinverse allocation.

A different redundancy problem arises when position is prescribed and attitude
remains free. Position determines the required world-frame force,
whereas attitude selects its body-frame direction. The directionally
nonuniform actuation of the platform therefore assigns different infimal allocation
costs to attitudes realizing the same translation.

This work shows that  minimizing a positively homogeneous allocation cost  selects
underactuated-like behavior on a fully actuated multirotor. Aligning any
fixed minimizing direction with the required world-frame force confines
the body force to a fixed one-dimensional subspace without transverse
components. Full actuation remains valuable when disturbances or motion
changes modify the required-force direction: transverse authority
generates the new force immediately while reorientation recovers
minimum-cost alignment. Thus, nominal efficiency favors
underactuated-like motion, while full actuation supports tracking
resilience.

The main contributions of this study are:
\begin{enumerate}[leftmargin=*]
\item a fiberwise infimal-cost formulation showing that positively homogeneous
allocation costs induce force-magnitude-independent minimizing
directions;

\item a characterization of the minimum-cost attitude family, emergent
underactuation, and the exact optimality-gap ordering of full-pose
trajectories realizing the same position motion; and

\item a parallel-thrust lower bound and disturbance-response
interpretation clarifying the efficiency and tracking roles of
underactuated and fully actuated force generation.
\end{enumerate}
\ifSHORTVERSION
\else
The logical structure of this work is summarized in
\cref{fig:results-dependency-map}. The map separates the allocation and
homogeneity foundations from the core magnitude--direction result, its
underactuated-like consequence, the full-pose ordering, reserve wrench
authority, and the numerical illustration.
\begin{figure*}[t!]
\centering
\resizebox{0.9\textwidth}{!}{%
\begin{tikzpicture}[
  font=\sffamily,
  node distance=7mm and 13mm,
  formal/.style={
    rounded corners=2pt,
    draw=black!68,
    line width=0.65pt,
    fill=white,
    align=center,
    inner xsep=7pt,
    inner ysep=6pt,
    text width=8.4cm
  },
  core/.style={
    formal,
    draw=blue!72!black,
    line width=1.15pt,
    fill=blue!4,
    text width=11.15cm
  },
  consequence/.style={
    formal,
    draw=orange!78!black,
    fill=orange!4,
    text width=4.85cm
  },
  evidence/.style={
    formal,
    draw=green!55!black,
    fill=green!4,
    text width=11.3cm
  },
  dependency/.style={
    -{Latex[length=2.5mm,width=1.7mm]},
    line width=0.8pt,
    draw=black!72
  },
  interpretation/.style={
    -{Latex[length=2.5mm,width=1.7mm]},
    line width=0.75pt,
    dashed,
    draw=black!55
  },
  group/.style={
    rounded corners=4pt,
    draw=black!28,
    line width=0.55pt,
    fill=black!1,
    inner sep=8pt
  },
  grouptitle/.style={
    font=\bfseries\small,
    text=black!66,
    anchor=south west
  }
]

\node[formal] (fibers) {
  {\bfseries\normalsize Allocation geometry}\par\smallskip
  {\normalsize Actuator outputs that generate the same wrench form an
  allocation fiber. Full actuation makes every considered fiber
  nonempty; redundant actuation can give a fiber multiple points.}\par\smallskip
  {\itshape\small\color{black!72} \cref{def:full-redundant-actuation,def:allocation-fiber,def:infimal-allocation-value,def:zero-moment-allocation-value}}
};

\node[formal, right=of fibers] (homogeneity) {
  {\bfseries\normalsize Homogeneous fiberwise cost}\par\smallskip
  {\normalsize Minimizing a positively homogeneous actuator cost on every
  fiber produces homogeneous wrench and infimal zero-moment allocation costs.}\par\smallskip
  {\itshape\small\color{black!72} \cref{def:positive-homogeneity,prop:inheritance-homogeneity}}
};

\node[core, below=14mm of $(fibers.south)!0.5!(homogeneity.south)$]
  (direction) {
  {\bfseries\normalsize Magnitude and direction separate}\par\smallskip
  {\normalsize Changing the required-force magnitude scales the minimum
  allocation cost but does not change the minimizing body-force
  directions.}\par\smallskip
  {\itshape\small\color{black!72} \cref{def:directional-allocation-cost,prop:magnitude-direction-separation}}
};

\node[core, below=of direction] (emergence) {
  {\bfseries\normalsize Emergent underactuation}\par\smallskip
  {\normalsize A minimum-cost attitude aligns a fixed minimum-allocation
  direction with the required world-frame force. Consequently,}\par\smallskip
  {$\displaystyle
    \requiredBodyForce(\timeCoordinate)
=\requiredForceMagnitude(\timeCoordinate)\minimumForceDirection,
    \quad
    \forceProjectorComplement    \requiredBodyForce(\timeCoordinate)=\zeroForceCoordinates.$}\par\smallskip
  {\normalsize The platform remains fully actuated, but its minimum-cost
  translational force has the structure of a one-directional
  underactuated platform.}\par\smallskip
  {\itshape\small\color{black!72} \cref{thm:minimum-cost-attitude-family,cor:emergent-underactuation}}
};

\node[consequence, below left=15mm and 9mm of emergence, xshift=25mm]
  (trajectory) {
  {\bfseries\normalsize Trajectory-level optimality}\par\smallskip
  {\normalsize Every attitude trajectory has a nonnegative accumulated
  allocation gap:}\par\smallskip
  {$\displaystyle
    \trajectoryCostOf{\attitude}{\positionTrajectory}
    =\minimumTrajectoryCostOf{\positionTrajectory}
    +\accumulatedGapOf{\attitude}{\positionTrajectory}.$}\par\smallskip
  {\normalsize The minimum class consists exactly of trajectories remaining
  in the minimum-direction set almost everywhere.}\par\smallskip
  {\itshape\small\color{black!72} \cref{def:allocation-optimality-gap,prop:exact-cost-decomposition,prop:cost-ordering,prop:minimum-equivalence-class}}
};

\node[consequence, below=15mm of emergence] (parallel) {
  {\bfseries\normalsize Parallel-thrust lower bound}\par\smallskip
  {\normalsize With summed nonnegative actuator output, parallel and
  codirectional rotor forces avoid cancellation and attain the universal
  force-generation lower bound whenever zero moment is feasible.}\par\smallskip
  {\itshape\small\color{black!72} \cref{rem:parallel-thrust-lower-bound}}
};

\node[consequence, below right=15mm and 9mm of emergence,xshift=-25mm]
  (transient) {
  {\bfseries\normalsize Full actuation during transitions}\par\smallskip
  {\normalsize When the required-force direction changes, full actuation can
  generate transverse force immediately and later recover minimum-cost
  alignment. A structurally underactuated platform must first
  reorient.}\par\smallskip
  {\itshape\small\color{black!72} \cref{sec:resilient-full-actuation,eq:fixed-attitude-disturbance-gap,eq:disturbed-cost-decomposition}}
};

\node[evidence, below=29mm of parallel] (numerics) {
  {\bfseries\normalsize Numerical illustration}\par\smallskip
  {\normalsize UA preserves the one-directional force structure but develops
  tracking error during reorientation. FA-opt tracks immediately and
  approaches the new minimum-cost attitude. FA-const also tracks
  immediately but retains a larger allocation cost.}\par\smallskip
  {\itshape\small\color{black!72} \cref{sec:numerical-illustration,fig:numerical-comparison}}
};

\coordinate (fibersStart) at
  ($(fibers.south)!0.5!(fibers.south east)$);

\coordinate (homogeneityStart) at
  ($(homogeneity.south)!0.5!(homogeneity.south west)$);

\draw[dependency]
  (fibersStart) --
  (fibersStart |- direction.north);

\draw[dependency]
  (homogeneityStart) --
  (homogeneityStart |- direction.north);

\draw[dependency] (direction) -- (emergence);
\draw[dependency] (emergence) -- (trajectory);

\coordinate (fibersStartB) at
  ($(fibers.south)!0.9!(fibers.south west)$);

\coordinate (homogeneityStartB) at
  ($(homogeneity.south)!0.9!(homogeneity.south east)$);

\draw[dependency] (fibersStartB) to[out=-90,in=180] (parallel.north west);
\draw[dependency] (homogeneityStartB) to[out=-90,in=0] (parallel.north east);

\draw[dependency] (emergence) -- (transient);

\draw[interpretation] (emergence) -- (parallel);
\draw[interpretation] (trajectory.south) -- (numerics.north west);
\draw[interpretation] (parallel.south) -- (numerics.north);
\draw[interpretation] (transient.south) -- (numerics.north east);

\begin{scope}[on background layer]
  \node[group, fit=(fibers)(homogeneity)] (foundationgroup) {};
  \node[group, draw=blue!28, fill=blue!1,
        fit=(direction)(emergence)] (coregroup) {};
  \node[group, draw=orange!35, fill=orange!1,
        fit=(trajectory)(parallel)(transient)] (consequencegroup) {};
  \node[group, draw=green!32!black, fill=green!1,
        fit=(numerics)] (evidencegroup) {};
\end{scope}

\node[grouptitle] at ($(foundationgroup.north west)+(0,1.5mm)$)
  {FOUNDATIONS};
\node[grouptitle, text=blue!60!black]
  at ($(coregroup.north west)+(0,1.5mm)$) {CORE THEORY};
\node[grouptitle, text=orange!65!black]
  at ($(consequencegroup.north west)+(0,1.5mm)$) {CONSEQUENCES};
\node[grouptitle, text=green!42!black]
  at ($(evidencegroup.north west)+(0,1.5mm)$) {ILLUSTRATION};

\node[anchor=north east, font=\normalsize, align=left]
  at ($(evidencegroup.south east)+(0,-2.5mm)$) {
  \tikz[baseline=-0.55ex]\draw[dependency] (0,0)--(8mm,0);
  formal dependence
  \quad
  \tikz[baseline=-0.55ex]\draw[interpretation] (0,0)--(8mm,0);
  interpretation or evidence
};

\end{tikzpicture}%
}
\caption{Dependency map of the main results. Each box states the
control-theoretic content in accessible terms and identifies the formal result
that establishes it. Solid arrows denote formal dependence; dashed arrows denote
architectural interpretation or numerical evidence.}
\label{fig:results-dependency-map}
\end{figure*}

\fi

\ifSHORTVERSION
\else

\section{Related Work}
\label{sec:related-work}

Multirotor modeling and control have developed from platform-specific
dynamic models and near-hover stabilization
\cite{Pounds2010CEP,Mahony2012RAM} toward nonlinear and geometric
control on $\RotationGroup$ and $\mathrm{SE}(3)$
\cite{Hua2009TAC,Lee2010CDC}. Differential flatness and
polynomial trajectory generation have enabled dynamically feasible and
aggressive flight \cite{Mellinger2011ICRA,Mellinger2012IJRR}, while
efficient motion primitives and aerodynamic feedforward have extended
these capabilities to online planning and high-speed tracking
\cite{Mueller2015TRO,Faessler2018RAL}. These approaches exploit the
underactuated structure of conventional parallel-thrust multirotors, for
which translation determines a preferred thrust direction and attitude
orients that direction in space.

Aerospace flight-control research emphasizes robustness, control
effectiveness, handling qualities, and integrated vehicle--controller
design. Incremental nonlinear dynamic inversion provides high-performance
attitude control with online adaptation of control effectiveness
\cite{Smeur2016JGCD}, and its combination with structured robust control
supports disturbance rejection and full-pose tracking of redundantly actuated
multirotors \cite{Hachem2026JGCD}. Multirotor co-design
has addressed robust performance, handling qualities, and motor failures
jointly \cite{Mabboux2024AST}, whereas bidirectional fixed-pitch
propulsion has been studied through unified modeling, nonlinear control,
and control allocation \cite{Maier2018IROS}. These developments show how
actuation design and feedback control jointly determine the usable
wrench authority of a multirotor.

A complementary research line changes the actuation geometry itself.
Tilted, vectored, bidirectional, and reconfigurable rotor arrangements
provide complete or configuration-dependent wrench authority
\cite{Ryll2015TCST,Rajappa2015ICRA,Brescianini2016ICRA,
Tognon2018RAL,Park2018TMECH,Kamel2018RAM,Yao2018IFAC,
Allenspach2020IJRR,Kotarski2021AS,Zhao2018RAL,Aboudorra2024JINT}.
Allocation-based classifications clarify how rotor number, placement,
orientation, and tilting mechanisms determine attainable wrenches and
platform capabilities \cite{Michieletto2018TRO,Hamandi2021IJRR,
Rashad2020Review}. At the control level, intrinsic tracking on
homogeneous Riemannian manifolds provides a unified treatment of fully
actuated mechanical systems that includes omnidirectional aerial robots
\cite{Welde2024LCS}. This literature mainly exploits full actuation for
independent pose tracking, enhanced wrench generation, or
configuration-specific objectives. This work instead investigates which
attitude is selected when only translation is prescribed and allocation
cost determines the use of the remaining actuation freedom.

Redundant actuation and fault tolerance provide a further connection
between vehicle geometry and allocation. General control-allocation
methods distribute commanded inputs while accommodating actuator
constraints and secondary objectives \cite{bodson2002,johansen2013}.
For multirotors, allocation has been studied through force--moment
coupling, control authority, failure tolerance, and energy-aware wrench
realization \cite{Michieletto2018TRO,Dyer2019ICRA,santos2022}.
Relaxed-hover solutions show how attitude and residual rotor authority
can sustain flight after severe actuator loss
\cite{Mueller2016IJRR}, while controllability analyses characterize the
authority retained under rotor degradation and failure
\cite{Du2015JGCD}. These studies assess or optimize allocation for a
specified wrench, failure condition, or operating regime. The
redundancy considered here additionally includes the attitude freedom
that maps one prescribed world-frame force into different body-frame
allocation demands.

This distinction is especially relevant to aerial manipulation and
physical interaction. Early aerial manipulators combined multirotors
with robotic mechanisms and interaction controllers
\cite{Keemink2012ICRA,Fumagalli2014RAM}. Compliance and impact-absorbing
mechanisms subsequently improved contact robustness
\cite{Bartelds2016RAL}, while conventional underactuated platforms were
shown capable of applying sustained forces through coordinated attitude
and contact control \cite{Wopereis2017ICRA}. Fully actuated platforms
permit contact force and vehicle orientation to be regulated more
independently, as demonstrated for omnidirectional contact inspection
\cite{Bodie2019RSS,SanchezCuevas2020Sensors}. Reviews of aerial
manipulation and multirotor physical interaction document the resulting
diversity of platforms, mechanisms, controllers, and tasks
\cite{Ruggiero2018RAL,Ladig2021JRM,Xu2023Survey}.

Across these research directions, actuation geometry is commonly
evaluated through wrench feasibility, tracking performance, robustness,
or the cost of realizing a prescribed wrench. The question considered
here is different: for a prescribed position motion and free attitude,
which full-pose realization minimizes allocation cost? The fiberwise
formulation shows that positive homogeneity makes the minimizing
directions independent of force magnitude and can select a fixed
body-force direction, thereby recovering an underactuated-like nominal
realization from a fully actuated architecture.

\fi

\section{Actuation Geometry and Fiberwise Cost}
\label{sec:actuation-geometry}

Consider a multirotor equipped with \(\actuatorCount\) propellers having fixed
positions and orientations relative to the body frame. The forces and moments
generated by the propellers yield the standard allocation model, whose
composition with the signed-quadratic propeller model gives the
propeller-speed-to-wrench map \cite{Michieletto2018TRO}.
Let \(\ActuatorStateSpace\subseteq\RealNumbers^{\actuatorCount}\) be the
admissible propeller-speed space,
\(\ActuatorOutputSpace\subseteq\RealNumbers^{\actuatorCount}\) the
\emph{actuator-output} space, and
\(\WrenchSpace=\RealNumbers^{\wrenchDimension}\) the selected body-wrench
coordinate space. The corresponding coordinate arrays are
\(\actuatorStateArray\in\ActuatorStateSpace\),
\(\actuatorOutputArray\in\ActuatorOutputSpace\), and
\(\wrenchCoordinates\in\WrenchSpace\), respectively.
After absorbing the positive thrust coefficients into \(\actuatorStateArray\), the propeller-speed-to-wrench map factors as
\begin{equation}
 \wrenchCoordinates
 =\physicalTaskMap(\actuatorStateArray)
 =\allocationMap\!\left(\actuatorMap(\actuatorStateArray)\right),
 \label{eq:physical-task-map}
\end{equation}
where the componentwise signed-quadratic actuator map is
\begin{equation}
 \actuatorOutputArray=\actuatorMap(\actuatorStateArray),
 \quad
 \actuatorOutputComponent{\actuatorIndex}
 =\actuatorStateComponent{\actuatorIndex}
 \absoluteValue{\actuatorStateComponent{\actuatorIndex}},
 \label{eq:actuator-map}
\end{equation}
and \(\allocationMap:\ActuatorOutputSpace\to\WrenchSpace\) is the linear
allocation map. For unidirectional propellers, the admissible state and output
spaces are restricted to nonnegative coordinates. For bidirectional
propellers, signed coordinates describe the two possible force directions along
each fixed propeller axis.

The linear allocation map is represented in coordinates by
\begin{equation}
 \wrenchCoordinates
 =\allocationMap(\actuatorOutputArray)
 =\allocationMatrix\actuatorOutputArray,
 \quad
 \allocationMatrix\in
 \RealNumbers^{\wrenchDimension\times\actuatorCount}.
 \label{eq:allocation-map}
\end{equation}
This work concerns the geometry and cost of the
linear allocation stage; propeller-speed nonlinearity, regularity and actuator dynamics are outside the present analysis.

\ifSHORTVERSION
\else
Before developing the individual definitions, \cref{fig:mathematical-allocation-synopsis}
provides a compact roadmap of the complete construction. It shows how the
physical maps lead to allocation fibers and their minimum outputs, and how the
zero-moment and unit-sphere restrictions ultimately define the minimizing
force directions studied in \cref{sec:minimum-directions}.
\begin{figure}[t!]
\centering
\begin{minipage}{0.99\columnwidth}
\centering

{\bfseries Physical factorization}
\[
\begin{array}{ccccc}
 \ActuatorStateSpace
 & \xrightarrow{\ \actuatorMap\ }
 & \ActuatorOutputSpace
 & \overset{\allocationMap}{\twoheadrightarrow}
 & \WrenchSpace
 \\
 \actuatorStateArray
 & \longmapsto
 & \actuatorOutputArray
 & \longmapsto
 & \wrenchCoordinates
\end{array}
\quad
\physicalTaskMap=\allocationMap\circ\actuatorMap,
\quad
\allocationMap(\ActuatorOutputSpace)=\WrenchSpace.
\]

\vspace{0.6em}
{\bfseries Fibers and fiberwise infimization}
\[
\minimumOutputSet(\wrenchCoordinates)
\subseteq
\fiberAt{\wrenchCoordinates}
=
\allocationMap^{-1}(\{\wrenchCoordinates\})
\subseteq
\ActuatorOutputSpace,
\]
\[
\actuatorCost:\ActuatorOutputSpace\to\NonnegativeReals,
\quad
\wrenchValue:\WrenchSpace\to\NonnegativeReals,
\quad
\minimumOutputSet:\WrenchSpace\rightrightarrows\ActuatorOutputSpace,
\]
\[
\wrenchValue(\wrenchCoordinates)
=
\infimum_{\actuatorOutputArray\in\fiberAt{\wrenchCoordinates}}
\actuatorCost(\actuatorOutputArray),
\quad
\actuatorCost(\actuatorOutputArray)
\geq
\wrenchValue(\allocationMap(\actuatorOutputArray)),
\]
\[
\actuatorCost(\actuatorOutputArray)
=
\wrenchValue(\allocationMap(\actuatorOutputArray))
\quad\Longleftrightarrow\quad
\actuatorOutputArray
\in
\minimumOutputSet(\allocationMap(\actuatorOutputArray)).
\]

\vspace{0.6em}
{\bfseries Zero-moment embedding and directional restriction}
\[
\begin{array}{ccccc}
 \UnitSphere
 & \hookrightarrow
 & \RealNumbers^{\spatialDimension}
 & \xrightarrow{\ \zeroMomentEmbedding\ }
 & \WrenchSpace
 \\
 \Big\downarrow\scriptstyle{\directionalValue}
 &&
 \Big\downarrow\scriptstyle{\forceValue}
 &&
 \Big\downarrow\scriptstyle{\wrenchValue}
 \\
 \NonnegativeReals
 & =
 & \NonnegativeReals
 & =
 & \NonnegativeReals
\end{array}
\]
\[
\forceValue=\wrenchValue\circ\zeroMomentEmbedding,
\quad
\directionalValue=\left.\forceValue\right|_{\UnitSphere},
\quad
\zeroMomentEmbedding(\forceCoordinates)
=
\begin{bmatrix}
 \forceCoordinates\\
 \zeroMomentCoordinates
\end{bmatrix}.
\]

\vspace{0.6em}
{\bfseries Minimum directional cost and minimizing directions}
\[
\minimumDirectionalValue
=
\minimum_{\unitForceDirection\in\UnitSphere}
\directionalValue(\unitForceDirection),
\quad
\minimumDirectionSet
=
\directionalValue^{-1}(\{\minimumDirectionalValue\})
=
\argminimum_{\unitForceDirection\in\UnitSphere}
\directionalValue(\unitForceDirection),
\]
\[
\minimumDirectionSet\times\{\zeroMomentCoordinates\}
\subseteq
\UnitSphere\times\{\zeroMomentCoordinates\}
\subseteq
\RealNumbers^{\spatialDimension}\times\{\zeroMomentCoordinates\}
\subseteq
\WrenchSpace.
\]

\end{minipage}
\caption{Synopsis of the actuation and allocation construction.
The physical task map factors through the actuator and allocation maps;
fiberwise infimization defines the infimal allocation cost and minimum-output
map; and the zero-moment embedding followed by restriction to unit force
directions defines the directional infimal allocation cost and its minimizing
directions.}
\label{fig:mathematical-allocation-synopsis}
\end{figure}

\fi

\begin{definition}[Full actuation and redundant actuation]
\label{def:full-redundant-actuation}
The platform is fully actuated on $\WrenchSpace$ if every
$\wrenchCoordinates\in\WrenchSpace$ admits an
$\actuatorOutputArray\in\ActuatorOutputSpace$ satisfying
$\allocationMap(\actuatorOutputArray)=\wrenchCoordinates$.
The platform has redundant actuation if it is fully actuated and there
exist distinct
$\actuatorOutputArray_{1},\actuatorOutputArray_{2}
\in\ActuatorOutputSpace$ such that
$\allocationMap(\actuatorOutputArray_{1})
=\allocationMap(\actuatorOutputArray_{2})$.
If
$\ActuatorOutputSpace=\RealNumbers^{\actuatorCount}$, full actuation is
equivalent to
$\matrixRank\allocationMatrix=\wrenchDimension$; in this case,
$\actuatorCount>\wrenchDimension$ implies redundant actuation. If
$\ActuatorOutputSpace$ is sign constrained, full row rank remains
necessary but is not sufficient for full actuation, which additionally
requires
$\allocationMap(\ActuatorOutputSpace)=\WrenchSpace$.
\end{definition}

\begin{definition}[Allocation fiber]
\label{def:allocation-fiber}
The \emph{fiber} of $\wrenchCoordinates\in\WrenchSpace$ is
\begin{equation}
 \fiberAt{\wrenchCoordinates}
 :=
 \left\{
 \actuatorOutputArray\in\ActuatorOutputSpace:
 \allocationMap(\actuatorOutputArray)=\wrenchCoordinates
 \right\}.
 \label{eq:allocation-fiber}
\end{equation}
\end{definition}

The fiber contains all  actuator-output arrays realizing the
same wrench. Under redundant actuation, a fiber can contain multiple
points and therefore task-preserving internal freedom.

\begin{assumption}
\label{ass:standing-actuation}
The selected wrench space is the complete body-wrench coordinate space,
so that
$\WrenchSpace=\RealNumbers^{\fullWrenchDimension}$
and
$\wrenchDimension=\fullWrenchDimension$.
The platform is fully actuated on $\WrenchSpace$.
Consequently, every allocation fiber is nonempty.
\end{assumption}

\subsection{Minimum cost on the fibers}

Consider an \emph{actuator-output cost}
$\actuatorCost:\ActuatorOutputSpace\to\NonnegativeReals$ and the fiberwise infimization  defined in the following.

\begin{definition}[Infimal allocation cost and minimum-output map]
\label{def:infimal-allocation-value}
The infimal allocation cost is the function
\begin{equation}
 \wrenchValue:
 \WrenchSpace
 \to
 \NonnegativeReals,
 \quad
 \wrenchValue(\wrenchCoordinates)
 :=
 \infimum_{\actuatorOutputArray\in\fiberAt{\wrenchCoordinates}}
 \actuatorCost(\actuatorOutputArray).
 \label{eq:wrench-value}
\end{equation}
The minimum-output map is the set-valued map
\begin{equation}
 \minimumOutputSet:
 \WrenchSpace
 \rightrightarrows
 \ActuatorOutputSpace,
 \quad
 \minimumOutputSet(\wrenchCoordinates)
 :=
 \left\{
 \actuatorOutputArray\in\fiberAt{\wrenchCoordinates}:
 \actuatorCost(\actuatorOutputArray)
 =
 \wrenchValue(\wrenchCoordinates)
 \right\}.
 \label{eq:minimum-output-set}
\end{equation}
\end{definition}

The set $\minimumOutputSet(\wrenchCoordinates)$ is nonempty if and only
if the infimum in \cref{eq:wrench-value} is attained. In that case,
\begin{equation}
 \minimumOutputSet(\wrenchCoordinates)
 =
 \argminimum_{\actuatorOutputArray\in\fiberAt{\wrenchCoordinates}}
 \actuatorCost(\actuatorOutputArray) \subseteq \fiberAt{\wrenchCoordinates}.
 \label{eq:minimum-output-set-argmin}
\end{equation}

\ifSHORTVERSION
\else
The geometric role of fiberwise infimization is detailed in
\cref{fig:fiberwise-allocation-construction}. Fixing a wrench selects its
allocation fiber in $\ActuatorOutputSpace$; minimizing $\actuatorCost$ on that
fiber produces both the value $\wrenchValue(\wrenchCoordinates)$ and the
possibly set-valued collection $\minimumOutputSet(\wrenchCoordinates)$ of
outputs attaining it.
\begin{figure*}[t!]
\centering
\resizebox{0.85\textwidth}{!}{%
\begin{tikzpicture}[
  font=\sffamily,
  >=Latex,
  space/.style={
    rounded corners=2pt,
    draw=black!68,
    line width=0.7pt,
    fill=white,
    align=center,
    inner xsep=8pt,
    inner ysep=6pt,
    minimum height=15mm
  },
  state/.style={
    space,
    draw=blue!65!black,
    fill=blue!4,
    text width=3.3cm
  },
  output/.style={
    space,
    draw=orange!75!black,
    fill=orange!4,
    text width=3.3cm
  },
  wrench/.style={
    space,
    draw=green!55!black,
    fill=green!4,
    text width=3.3cm
  },
  cost/.style={
    space,
    draw=violet!70!black,
    fill=violet!4,
    text width=4.0cm
  },
  subset/.style={
    rounded corners=2pt,
    draw=black!58,
    fill=black!2,
    align=center,
    inner xsep=6pt,
    inner ysep=5pt,
    text width=4.4cm
  },
  selected/.style={
    subset,
    draw=red!68!black,
    fill=red!3,
    line width=0.9pt
  },
  map/.style={
    -{Latex[length=2.4mm,width=1.6mm]},
    line width=0.8pt,
    draw=black!72
  },
  inclusion/.style={
    -{Latex[length=2.2mm,width=1.5mm]},
    line width=0.7pt,
    draw=black!55
  },
  induced/.style={
    -{Latex[length=2.4mm,width=1.6mm]},
    line width=0.8pt,
    dashed,
    draw=violet!70!black
  },
  identify/.style={
    -{Latex[length=2.2mm,width=1.5mm]},
    line width=0.7pt,
    densely dotted,
    draw=black!58
  },
  group/.style={
    rounded corners=4pt,
    draw=blue!25,
    fill=blue!1,
    inner sep=9pt
  }
]

\node[state] (V) {
  {\bfseries\normalsize Actuator states}\par\smallskip
  $\ActuatorStateSpace\subseteq\RealNumbers^{\actuatorCount}$\par
  $\actuatorStateArray$
};

\node[output, right=24mm of V] (X) {
  {\bfseries\normalsize Actuator outputs}\par\smallskip
  $\ActuatorOutputSpace\subseteq\RealNumbers^{\actuatorCount}$\par
  $\actuatorOutputArray$
};

\node[wrench, right=24mm of X] (W) {
  {\bfseries\normalsize Body wrenches}\par\smallskip
  $\WrenchSpace=\RealNumbers^{\fullWrenchDimension}$\par
  $\wrenchCoordinates$
};

\draw[map] (V) -- node[above,font=\normalsize] {$\actuatorMap$} (X);
\draw[map] (X) -- node[above,font=\normalsize] {$\allocationMap$} (W);
\draw[map]
  (V.north east)
  to[bend left=18]
  node[above,font=\normalsize] (compositionLabel)
  {$\physicalTaskMap=\allocationMap\circ\actuatorMap$}
  (W.north west);

\coordinate (compositionTop) at
  ($(compositionLabel.north)+(0,2mm)$);

\node[cost, below=17mm of X] (c) {
  {\bfseries\normalsize Actuator-output cost}\par\smallskip
  $\actuatorCost:\ActuatorOutputSpace\to\NonnegativeReals$
};
\draw[map] (X) -- (c);

\node[subset, below=17mm of W] (fiber) {
  {\bfseries\normalsize Allocation fiber}\par\smallskip
  $\fiberAt{\wrenchCoordinates}
   =\allocationMap^{-1}(\{\wrenchCoordinates\})
   \subseteq\ActuatorOutputSpace$
};
\draw[inclusion] (W) -- node[right,font=\normalsize,align=left]
  {fix $\wrenchCoordinates$\\take $\allocationMap^{-1}$} (fiber);

\node[cost, below=16mm of c] (Gamma) {
  {\bfseries\normalsize Infimal allocation cost}\par\smallskip
  $\wrenchValue:\WrenchSpace\to\NonnegativeReals$\par\smallskip
  $\displaystyle
   \wrenchValue(\wrenchCoordinates)
   =\infimum_{\actuatorOutputArray\in\fiberAt{\wrenchCoordinates}}
    \actuatorCost(\actuatorOutputArray)$
};

\node[selected, right=15mm of Gamma] (Xstar) {
  {\bfseries\normalsize Minimum-output map}\par\smallskip
  $\minimumOutputSet:
   \WrenchSpace\rightrightarrows\ActuatorOutputSpace$\par\smallskip
  $\minimumOutputSet(\wrenchCoordinates)
   \subseteq\fiberAt{\wrenchCoordinates}$
};

\draw[induced] (c) -- node[left,font=\normalsize,align=right]
  {infimize on\\each fiber} (Gamma);
\draw[induced] (fiber.south west) to[bend right=10]
  node[below left,xshift=-3mm,font=\normalsize] {infimum} (Gamma.north east);
\draw[induced] (fiber) -- node[right,font=\normalsize,align=left]
  {select outputs\\attaining the\\ infimum} (Xstar);
\draw[identify] (Gamma) -- node[below,font=\normalsize,align=center]
  {common\\ attained\\ value} (Xstar);

\begin{scope}[on background layer]
  \node[group,
    fit=(V)(X)(W)(c)(fiber)(Gamma)(Xstar)(compositionTop)]
    (physicalgroup) {};
\end{scope}

\node[anchor=north east,font=\normalsize,align=left]
  at ($(physicalgroup.south east)+(0,-3mm)$) {
  \tikz[baseline=-0.55ex]\draw[map] (0,0)--(8mm,0);
  map
  \quad
  \tikz[baseline=-0.55ex]\draw[inclusion] (0,0)--(8mm,0);
  inverse-image construction
  \quad
  \tikz[baseline=-0.55ex]\draw[induced] (0,0)--(8mm,0);
  optimization
  \quad
  \tikz[baseline=-0.55ex]\draw[identify] (0,0)--(8mm,0);
  attained-value relation
};

\end{tikzpicture}%
}
\caption{From physical actuation to fiberwise optimization. The
propeller-state-to-wrench map factors through the actuator and allocation
maps. Minimizing the actuator-output cost over each allocation fiber defines
the infimal allocation cost and the minimum-output map.}
\label{fig:fiberwise-allocation-construction}
\end{figure*}

\fi
For
$\forceCoordinates,\momentCoordinates
\in\RealNumbers^{\spatialDimension}$, write
\begin{equation}
 \wrenchCoordinates
 =
 \begin{bmatrix}
  \forceCoordinates^{\transpose} &
  \momentCoordinates^{\transpose}
 \end{bmatrix}^{\transpose},
 \quad
 \allocationMatrix
 =
 \begin{bmatrix}
  \forceMatrix^{\transpose} &
  \momentMatrix^{\transpose}
 \end{bmatrix}^{\transpose}.
 \label{eq:wrench-partition}
\end{equation}

\begin{definition}[Infimal zero-moment allocation cost]
\label{def:zero-moment-allocation-value}
The infimal zero-moment allocation cost is the function
\begin{equation}
 \forceValue
 :=
 \wrenchValue\circ\zeroMomentEmbedding:
 \RealNumbers^{\spatialDimension}
 \to
 \NonnegativeReals,
 \quad
 \zeroMomentEmbedding(\forceCoordinates)
 :=
 \begin{bmatrix}
  \forceCoordinates^{\transpose} &
  \zeroMomentCoordinates^{\transpose}
 \end{bmatrix}^{\transpose}.
 \label{eq:force-value}
\end{equation}
\end{definition}

Thus, $\forceValue$ is the infimal allocation cost $\wrenchValue$ restricted to $\RealNumbers^{\spatialDimension}\times\{\zeroMomentCoordinates\}$, identified with $\RealNumbers^{\spatialDimension}$.

\subsection{Positive homogeneity}

\begin{definition}[Positive homogeneity]
\label{def:positive-homogeneity}
Let \(\GenericCone\) be a cone. A function
\(\genericFunction:\GenericCone\to\NonnegativeReals\) is positively
homogeneous of degree \(\homogeneityDegree>0\) if
\begin{equation}
 \genericFunction(\positiveScale\genericArgument)
 =\positiveScale^{\homogeneityDegree}
 \genericFunction(\genericArgument)
 \label{eq:positive-homogeneity}
\end{equation}
for every \(\genericArgument\in\GenericCone\) and
\(\positiveScale\geq0\).
\end{definition}

\begin{proposition}[Inheritance of homogeneity]
\label{prop:inheritance-homogeneity}
For the linear allocation map introduced in \cref{eq:allocation-map}, suppose
that \(\ActuatorOutputSpace\) is a cone and that \(\actuatorCost\) is
positively homogeneous of degree \(\homogeneityDegree>0\). Then
\begin{equation}
 \wrenchValue(\positiveScale\wrenchCoordinates)
 =\positiveScale^{\homogeneityDegree}
 \wrenchValue(\wrenchCoordinates),
 \quad \positiveScale\geq0, \;\; \wrenchCoordinates\in\WrenchSpace.
 \label{eq:wrench-value-homogeneity}
\end{equation}
The function \(\forceValue\) is positively homogeneous of the same degree.
\end{proposition}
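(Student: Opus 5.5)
The plan is to show that the scaling $\actuatorOutputArray\mapsto\positiveScale\actuatorOutputArray$ maps fibers onto fibers, and then transport the infimum through it. I would treat $\positiveScale>0$ and $\positiveScale=0$ separately.

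\emph{Case $\positiveScale>0$.} First I establish the fiber identity
\[
\fiberAt{\positiveScale\wrenchCoordinates}=\positiveScale\,\fiberAt{\wrenchCoordinates}.
\]
Take $\actuatorOutputArray\in\fiberAt{\wrenchCoordinates}$. Since $\ActuatorOutputSpace$ is a cone, $\positiveScale\actuatorOutputArray\in\ActuatorOutputSpace$. By linearity, $\allocationMatrix(\positiveScale\actuatorOutputArray)=\positiveScale\wrenchCoordinates$, so $\positiveScale\actuatorOutputArray\in\fiberAt{\positiveScale\wrenchCoordinates}$. Conversely, take $\bm{y}\in\fiberAt{\positiveScale\wrenchCoordinates}$. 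Then $\bm{y}/\positiveScale\in\ActuatorOutputSpace$, again by the cone property with scale $1/\positiveScale$, and it lies in $\fiberAt{\wrenchCoordinates}$.

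Hence the scaling is a bijection between the two fibers. Positive homogeneity gives $\actuatorCost(\positiveScale\actuatorOutputArray)=\positiveScale^{\homogeneityDegree}\actuatorCost(\actuatorOutputArray)$. Therefore
\[
\wrenchValue(\positiveScale\wrenchCoordinates)=\infimum_{\actuatorOutputArray\in\fiberAt{\wrenchCoordinates}}\positiveScale^{\homogeneityDegree}\actuatorCost(\actuatorOutputArray)=\positiveScale^{\homogeneityDegree}\wrenchValue(\wrenchCoordinates),
\]
because a positive constant factor commutes with the infimum. Fibers are nonempty by \cref{ass:standing-actuation}, so all infima are finite real numbers.

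\emph{Case $\positiveScale=0$.} This is the step that needs care, since the fiber identity fails: $\fiberAt{\bm 0}$ is generally not $\{\bm 0\}$. I must show $\wrenchValue(\bm 0)=0$.

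Homogeneity with $\positiveScale=0$ gives $\actuatorCost(\bm 0)=0\cdot\actuatorCost(\actuatorOutputArray)=0$. This requires $\bm 0\in\ActuatorOutputSpace$, which follows from the cone property with $\positiveScale=0$, consistent with \cref{def:positive-homogeneity} allowing $\positiveScale\ge0$. Also $\allocationMatrix\bm 0=\bm 0$, so $\bm 0\in\fiberAt{\bm 0}$, and therefore $\wrenchValue(\bm 0)\le 0$. Nonnegativity of $\actuatorCost$ gives the reverse inequality, so $\wrenchValue(\bm 0)=0=0^{\homogeneityDegree}\wrenchValue(\wrenchCoordinates)$.

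Finally, for $\forceValue$: the embedding $\zeroMomentEmbedding$ is linear, so $\zeroMomentEmbedding(\positiveScale\forceCoordinates)=\positiveScale\zeroMomentEmbedding(\forceCoordinates)$. Then
\[
\forceValue(\positiveScale\forceCoordinates)=\wrenchValue(\positiveScale\zeroMomentEmbedding(\forceCoordinates))=\positiveScale^{\homogeneityDegree}\forceValue(\forceCoordinates),
\]
using the statement just proved on the cone $\RealNumbers^{\spatialDimension}$.
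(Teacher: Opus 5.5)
Your proof is correct and follows essentially the same route as the paper: the fiber identity $\fiberAt{\positiveScale\wrenchCoordinates}=\positiveScale\fiberAt{\wrenchCoordinates}$ for $\positiveScale>0$, passing the infimum through it, a separate $\positiveScale=0$ case using the feasible zero output of zero cost, and linearity of $\zeroMomentEmbedding$ for $\forceValue$. You are more explicit than the paper in two places: where the cone property enters (scaling by $1/\positiveScale$), and why $\positiveScale=0$ needs separate treatment; the argument itself is the same.
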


\begin{proof}
By linearity of $\allocationMap$, for $\positiveScale>0$,
$\actuatorOutputArray\in\fiberAt{\wrenchCoordinates}$ if and only if
$\positiveScale\actuatorOutputArray
\in\fiberAt{\positiveScale\wrenchCoordinates}$. Hence,
$\fiberAt{\positiveScale\wrenchCoordinates}
=\positiveScale\fiberAt{\wrenchCoordinates}$.
Taking the infimum of the homogeneous cost over these corresponding
fibers proves \cref{eq:wrench-value-homogeneity}. At
$\positiveScale=0$, the zero actuator output is feasible and has zero
cost. Homogeneity of $\forceValue$ follows from its definition.
\end{proof}

\begin{remark}[Homogeneous costs are relevant]
Representative positively homogeneous costs include the weighted
\(\normOrder\)-norm
\(\actuatorCost(\actuatorOutputArray)
=\vectorNormOrder{\actuatorWeightMatrix\actuatorOutputArray}{\normOrder}\),
of degree one, and its positive power
\(\actuatorCost(\actuatorOutputArray)
=\vectorNormOrder{\actuatorWeightMatrix\actuatorOutputArray}{\normOrder}
^{\homogeneityDegree}\), of degree \(\homogeneityDegree\). A separable
power-law cost is
\(\actuatorCost(\actuatorOutputArray)
=\sum_{\actuatorIndex=1}^{\actuatorCount}
\actuatorCostWeight{\actuatorIndex}
\absoluteValue{\actuatorOutputComponent{\actuatorIndex}}^{\homogeneityDegree}\),
where \(\actuatorCostWeight{\actuatorIndex}>0\). Special cases are the
summed nonnegative actuator output
\(\actuatorCost(\actuatorOutputArray)
=\onesArray^{\transpose}\actuatorOutputArray\), (\(\homogeneityDegree=1\)); the squared
Euclidean cost
\(\actuatorCost(\actuatorOutputArray)
=\vectorNormOrder{\actuatorOutputArray}{2}^{2}\),  (\(\homogeneityDegree=2\)); and the propeller-drag power surrogate
\(\actuatorCost(\actuatorOutputArray)
=\sum_{\actuatorIndex=1}^{\actuatorCount}
\actuatorPowerCoefficient{\actuatorIndex}
\absoluteValue{\actuatorOutputComponent{\actuatorIndex}}^{3/2}\)
 of \cite{Aboudorra2024JINT}, with \(\homogeneityDegree=3/2\).
\end{remark}

\section{Minimum-Allocation Force Directions}
\label{sec:minimum-directions}

By \cref{ass:standing-actuation}, since $\actuatorCost$ is finite-valued
and nonnegative, $\forceValue$ is finite on $\UnitSphere$.

\begin{assumption}
\label{ass:directional-regularity}
The restriction of the infimal zero-moment allocation cost
$\forceValue$ to $\UnitSphere$ is lower semicontinuous.
\end{assumption}

\begin{definition}[Directional infimal allocation cost]
\label{def:directional-allocation-cost}
The directional infimal allocation cost is the function
\begin{equation}
 \directionalValue
 :=\left.\forceValue\right|_{\UnitSphere}:
 \UnitSphere\to\NonnegativeReals,
 \quad \text{thus} \quad
 \directionalValue(\unitForceDirection)
 :=\forceValue(\unitForceDirection).
 \label{eq:directional-value}
\end{equation}
Thus, $\directionalValue$ is the infimal allocation cost $\wrenchValue$ restricted to wrenches in $\UnitSphere\times\{\zeroMomentCoordinates\}$, identified with $\UnitSphere$.
Its minimum value on $\UnitSphere$ is
\begin{equation}
 \minimumDirectionalValue
 :=\minimum_{\unitForceDirection\in\UnitSphere}
 \directionalValue(\unitForceDirection)
 \in\NonnegativeReals,
 \label{eq:minimum-directional-value}
\end{equation}
and the corresponding minimum-level set is
\begin{equation}
 \minimumDirectionSet
 :=\directionalValue^{-1}\!\left(
 \left\{\minimumDirectionalValue\right\}\right)
 =\argminimum_{\unitForceDirection\in\UnitSphere}
 \directionalValue(\unitForceDirection)
 \subseteq\UnitSphere.
 \label{eq:minimum-direction-set}
\end{equation}
\end{definition}
\ifSHORTVERSION
\else
Figure~\ref{fig:directional-cost-construction} isolates the two restrictions
used above. The zero-moment embedding transfers the infimal wrench cost to
body-force space, while restriction to $\UnitSphere$ removes force magnitude
and leaves the directional landscape whose minimum level set is
$\minimumDirectionSet$.
\begin{figure*}[t!]
\centering
\resizebox{0.95\textwidth}{!}{%
\begin{tikzpicture}[
  font=\sffamily,
  >=Latex,
  space/.style={
    rounded corners=2pt,
    draw=black!68,
    line width=0.7pt,
    fill=white,
    align=center,
    inner xsep=8pt,
    inner ysep=6pt,
    minimum height=15mm
  },
  wrench/.style={
    space,
    draw=green!55!black,
    fill=green!4,
    text width=3.3cm
  },
  cost/.style={
    space,
    draw=violet!70!black,
    fill=violet!4,
    text width=4.0cm
  },
  subset/.style={
    rounded corners=2pt,
    draw=black!58,
    fill=black!2,
    align=center,
    inner xsep=6pt,
    inner ysep=5pt,
    text width=4.4cm
  },
  selected/.style={
    subset,
    draw=red!68!black,
    fill=red!3,
    line width=0.9pt
  },
  map/.style={
    -{Latex[length=2.4mm,width=1.6mm]},
    line width=0.8pt,
    draw=black!72
  },
  inclusion/.style={
    -{Latex[length=2.2mm,width=1.5mm]},
    line width=0.7pt,
    draw=black!55
  },
  induced/.style={
    -{Latex[length=2.4mm,width=1.6mm]},
    line width=0.8pt,
    dashed,
    draw=violet!70!black
  },
  group/.style={
    rounded corners=4pt,
    draw=green!28!black,
    fill=green!1,
    inner sep=9pt
  },
  smallnote/.style={
    font=\normalsize,
    align=center,
    text=black!72
  }
]

\node[wrench] (W2) {
  {\bfseries\normalsize Wrench space}\par\smallskip
  $\WrenchSpace=\RealNumbers^{\fullWrenchDimension}$\par
  $\wrenchCoordinates=
   \begin{bmatrix}
    \forceCoordinates\\
    \momentCoordinates
   \end{bmatrix}$
};

\node[subset, left=24mm of W2] (R3) {
  {\bfseries\normalsize Body-force space}\par\smallskip
  $\RealNumbers^{\spatialDimension}$\par
  $\forceCoordinates$
};

\node[subset, below=17mm of R3] (S2) {
  {\bfseries\normalsize Unit force directions}\par\smallskip
  $\UnitSphere\subseteq\RealNumbers^{\spatialDimension}$\par
  $\unitForceDirection$
};

\node[cost, right=24mm of W2] (Gamma2) {
  {\bfseries\normalsize Infimal allocation cost}\par\smallskip
  $\wrenchValue:\WrenchSpace\to\NonnegativeReals$
};

\node[cost, below=17mm of W2] (gamma) {
  {\bfseries\normalsize Infimal zero-moment allocation cost}\par\smallskip
  $\forceValue=\wrenchValue\circ\zeroMomentEmbedding$\par
  $\forceValue:
   \RealNumbers^{\spatialDimension}\to\NonnegativeReals$
};

\node[cost, below=17mm of gamma] (gammabar) {
  {\bfseries\normalsize Directional infimal allocation cost}\par\smallskip
  $\directionalValue=\left.\forceValue\right|_{\UnitSphere}$\par
  $\directionalValue:\UnitSphere\to\NonnegativeReals$
};

\node[selected, left=20.5mm of gammabar] (Astar) {
  {\bfseries\normalsize Minimizing directions}\par\smallskip
  $\displaystyle
   \minimumDirectionSet
   =\argminimum_{\unitForceDirection\in\UnitSphere}
    \directionalValue(\unitForceDirection)$\par\smallskip
  $\minimumDirectionSet\subseteq\UnitSphere$
};

\node[selected, right=18mm of gammabar] (gammastar) {
  {\bfseries\normalsize Minimum directional cost}\par\smallskip
  $\displaystyle
   \minimumDirectionalValue
   =\minimum_{\unitForceDirection\in\UnitSphere}
    \directionalValue(\unitForceDirection)$\par\smallskip
  $\minimumDirectionalValue\in\NonnegativeReals$
};

\draw[map] (R3) -- node[above,font=\normalsize]
  {$\zeroMomentEmbedding(\forceCoordinates)
    =\begin{bmatrix}
      \forceCoordinates\\
      \zeroMomentCoordinates
     \end{bmatrix}$}
  (W2);
\draw[map] (W2) -- (Gamma2);
\draw[map] (R3.south east) to[bend right=14]
  node[below,font=\normalsize,xshift=-2mm] {$\forceValue$} (gamma.north west);
\draw[map] (Gamma2.south west) to[bend left=14]
  node[below,font=\normalsize,xshift=5mm]
  {$\wrenchValue\circ\zeroMomentEmbedding$} (gamma.north east);

\draw[inclusion] (S2) -- node[left,font=\normalsize] {$\hookrightarrow$} (R3);
\draw[map] (S2) -- node[below left,font=\normalsize]
  {$\directionalValue$} (gammabar);
\draw[induced] (gamma) -- node[right,font=\normalsize]
  {restrict to $\UnitSphere$} (gammabar);
\draw[inclusion] (Astar) -- node[above,font=\normalsize,xshift=-2mm]
  {$\subseteq$} (S2);
\draw[induced] (gammabar) -- node[above,font=\normalsize]
  {argmin} (Astar);
\draw[induced] (gammabar) -- node[above,font=\normalsize,yshift=0.5mm]
  {minimum} (gammastar);

\node[smallnote, below=4mm of gammabar] (nested) {
  $\minimumDirectionSet\times\{\zeroMomentCoordinates\}
   \subseteq
   \UnitSphere\times\{\zeroMomentCoordinates\}
   \subseteq
   \RealNumbers^{\spatialDimension}\times\{\zeroMomentCoordinates\}
   \subseteq\WrenchSpace$
};

\begin{scope}[on background layer]
  \node[group,
    fit=(R3)(S2)(W2)(Gamma2)(gamma)(gammabar)(Astar)(gammastar)(nested)]
    (restrictiongroup) {};
\end{scope}

\node[anchor=north east,font=\normalsize,align=left]
  at ($(restrictiongroup.south east)+(0,-3mm)$) {
  \tikz[baseline=-0.55ex]\draw[map] (0,0)--(8mm,0);
  map
  \quad
  \tikz[baseline=-0.55ex]\draw[inclusion] (0,0)--(8mm,0);
  inclusion
  \quad
  \tikz[baseline=-0.55ex]\draw[induced] (0,0)--(8mm,0);
  optimization or restriction
};

\end{tikzpicture}%
}
\caption{Restriction of the infimal allocation cost to zero-moment
wrenches and unit force directions. The resulting directional infimal
allocation cost determines the minimum directional cost and the set of
minimizing force directions.}
\label{fig:directional-cost-construction}
\end{figure*}

\fi

\begin{remark}
Compactness of \(\UnitSphere\) and lower semicontinuity of
\(\directionalValue\) guarantee that \(\minimumDirectionSet\) is nonempty. The
set is determined  by the actuation geometry, the zero-moment constraint,
and the actuator cost. Its elements need not coincide with a body
axis or a symmetry direction, and the set may contain one direction, several
directions, or a continuum of directions.
\end{remark}

\begin{proposition}[Magnitude--direction separation of the allocation cost]
\label{prop:magnitude-direction-separation}
For the linear allocation map introduced in \cref{eq:allocation-map}, suppose
that \(\ActuatorOutputSpace\) is a cone and that
\(\actuatorCost:\ActuatorOutputSpace\to\NonnegativeReals\) is positively
homogeneous of degree \(\homogeneityDegree>0\). Then, for every requested
body-force magnitude \(\requiredForceMagnitude\geq0\) and every unit body-force
direction \(\unitForceDirection\in\UnitSphere\),
\begin{equation}
 \forceValue(\requiredForceMagnitude\unitForceDirection)
 =\requiredForceMagnitude^{\homogeneityDegree}
 \directionalValue(\unitForceDirection).
 \label{eq:magnitude-direction-separation}
\end{equation}
Consequently, for every \(\requiredForceMagnitude>0\),
\begin{equation}
 \argminimum_{\unitForceDirection\in\UnitSphere}
 \forceValue(\requiredForceMagnitude\unitForceDirection)
 =\minimumDirectionSet.
 \label{eq:fixed-magnitude-minimizers}
\end{equation}
Thus, the directions minimizing the allocation cost are independent of the
requested body-force magnitude.
\end{proposition}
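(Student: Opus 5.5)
The plan is to obtain \cref{eq:magnitude-direction-separation} directly from \cref{prop:inheritance-homogeneity}, and then read off \cref{eq:fixed-magnitude-minimizers} by noting that multiplying by a positive constant does not change the set of minimizers.

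First, fix \(\requiredForceMagnitude\geq0\) and \(\unitForceDirection\in\UnitSphere\). The embedding \(\zeroMomentEmbedding\) is linear, so \(\zeroMomentEmbedding(\requiredForceMagnitude\unitForceDirection)=\requiredForceMagnitude\,\zeroMomentEmbedding(\unitForceDirection)\). Applying \cref{eq:wrench-value-homogeneity} with \(\positiveScale=\requiredForceMagnitude\) and \(\wrenchCoordinates=\zeroMomentEmbedding(\unitForceDirection)\) gives
\(\forceValue(\requiredForceMagnitude\unitForceDirection)=\wrenchValue(\requiredForceMagnitude\,\zeroMomentEmbedding(\unitForceDirection))=\requiredForceMagnitude^{\homogeneityDegree}\wrenchValue(\zeroMomentEmbedding(\unitForceDirection))=\requiredForceMagnitude^{\homogeneityDegree}\forceValue(\unitForceDirection)\).
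By \cref{def:directional-allocation-cost}, the last quantity equals \(\requiredForceMagnitude^{\homogeneityDegree}\directionalValue(\unitForceDirection)\). The case \(\requiredForceMagnitude=0\) is already covered by \cref{prop:inheritance-homogeneity}: there both sides are zero, because the zero output is feasible and has zero cost. Equivalently, one can simply cite the last sentence of \cref{prop:inheritance-homogeneity}, which states that \(\forceValue\) is homogeneous.

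Second, fix \(\requiredForceMagnitude>0\), so that \(\requiredForceMagnitude^{\homogeneityDegree}>0\). By the identity just proved, the function \(\unitForceDirection\mapsto\forceValue(\requiredForceMagnitude\unitForceDirection)\) on \(\UnitSphere\) is the positive constant \(\requiredForceMagnitude^{\homogeneityDegree}\) times \(\directionalValue\). Multiplying by a positive constant preserves order, so \(\forceValue(\requiredForceMagnitude\unitForceDirection)\le\forceValue(\requiredForceMagnitude\unitForceDirection')\) holds if and only if \(\directionalValue(\unitForceDirection)\le\directionalValue(\unitForceDirection')\). Hence the two functions have the same minimizers on \(\UnitSphere\). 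This set is \(\minimumDirectionSet\) as defined in \cref{eq:minimum-direction-set}, and the minimum value is \(\requiredForceMagnitude^{\homogeneityDegree}\minimumDirectionalValue\). The set is nonempty by the remark following \cref{def:directional-allocation-cost}, which uses \cref{ass:directional-regularity}, so the \(\argminimum\) is well defined.

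There is no substantial obstacle. The only point needing care is that the \(\argminimum\) is meaningful and that \(\minimumDirectionSet\) is nonempty, and the remark settles this. One should also note explicitly that \(\requiredForceMagnitude=0\) is excluded in \cref{eq:fixed-magnitude-minimizers}: at zero magnitude every direction attains the minimum, and the equality of sets generally fails.
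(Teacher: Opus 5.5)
Your proposal is correct and follows the paper's own argument: homogeneity of \(\forceValue\) from \cref{prop:inheritance-homogeneity} gives \(\forceValue(\requiredForceMagnitude\unitForceDirection)=\requiredForceMagnitude^{\homogeneityDegree}\directionalValue(\unitForceDirection)\), and the positive, direction-independent factor \(\requiredForceMagnitude^{\homogeneityDegree}\) leaves the minimizers on \(\UnitSphere\) unchanged. Your extra remarks on the excluded case \(\requiredForceMagnitude=0\) and on the nonemptiness of \(\minimumDirectionSet\) are accurate but go beyond what the paper's short proof states.
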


\begin{proof}
By \cref{prop:inheritance-homogeneity}, \(\forceValue\) is \(\homogeneityDegree\)-homogeneous. Hence,
\begin{equation}
 \forceValue(\requiredForceMagnitude\unitForceDirection)
 =\requiredForceMagnitude^{\homogeneityDegree}
 \forceValue(\unitForceDirection)
 =\requiredForceMagnitude^{\homogeneityDegree}
 \directionalValue(\unitForceDirection),
\end{equation}
which proves \cref{eq:magnitude-direction-separation}. For fixed positive
\(\requiredForceMagnitude\), the multiplicative factor is positive and
independent of \(\unitForceDirection\), so it does not change the minimizers.
\end{proof}
\ifSHORTVERSION
\else
The successive reductions are collected in
\cref{fig:complete-geometric-construction}. Fiberwise infimization first
transports the actuator-output cost to wrench space; zero-moment and
unit-sphere restrictions then expose the directional landscape; finally,
positive homogeneity reconstructs the solution on every spherical section
from the unit section. The figure also shows how a minimizing wrench is lifted
back to its minimum-output set.
\begin{figure}[t!]
\centering
\begin{minipage}{0.99\columnwidth}
\centering

{\small Actuator-output cost}
\[
 \actuatorCost:\ActuatorOutputSpace\to\NonnegativeReals
\]
\vspace{-0.5em}
\[
 \Big\downarrow\quad\text{fiberwise infimization along }
 \fiberAt{\wrenchCoordinates}
 =\allocationMap^{-1}(\{\wrenchCoordinates\})
\]
\vspace{-0.5em}

{\small Cost transported to wrench space}
\[
 \wrenchValue:\WrenchSpace\to\NonnegativeReals,
 \quad
 \wrenchValue(\wrenchCoordinates)
 =\infimum_{\actuatorOutputArray\in\fiberAt{\wrenchCoordinates}}
 \actuatorCost(\actuatorOutputArray)
\]
\vspace{-0.5em}
\[
 \minimumOutputSet(\wrenchCoordinates)
 =\left\{
 \actuatorOutputArray\in\fiberAt{\wrenchCoordinates}:
 \actuatorCost(\actuatorOutputArray)=\wrenchValue(\wrenchCoordinates)
 \right\}
\]
\vspace{-0.5em}

{\small Restriction to zero-moment wrenches}
\[
 \forceValue
 =\wrenchValue\circ\zeroMomentEmbedding,
 \quad
 \zeroMomentEmbedding(\forceCoordinates)
 =\begin{bmatrix}
   \forceCoordinates\\
   \zeroMomentCoordinates
  \end{bmatrix},
 \quad
 \forceValue:\RealNumbers^{\spatialDimension}\to\NonnegativeReals
\]
\vspace{-0.5em}
\[
 \Big\downarrow\quad\text{restriction to unit force directions}
\]
\vspace{-0.5em}

{\small Directional infimal allocation cost}
\[
 \directionalValue
 =\left.\forceValue\right|_{\UnitSphere}:
 \UnitSphere\to\NonnegativeReals
\]
\vspace{-0.5em}
\[
 \minimumDirectionalValue
 =\minimum_{\unitForceDirection\in\UnitSphere}
 \directionalValue(\unitForceDirection),
 \quad
 \minimumDirectionSet
 =\argminimum_{\unitForceDirection\in\UnitSphere}
 \directionalValue(\unitForceDirection)
\]
\vspace{-0.5em}

{\small Homogeneity reduces the cone to one spherical section}
\[
 \forceCoordinates
 =\requiredForceMagnitude\unitForceDirection,
 \quad
 \forceValue(\requiredForceMagnitude\unitForceDirection)
 =\requiredForceMagnitude^{\homogeneityDegree}
 \directionalValue(\unitForceDirection)
\]
\vspace{-0.5em}
\[
 \argminimum_{\vectorNorm{\forceCoordinates}=\requiredForceMagnitude}
 \forceValue(\forceCoordinates)
 =\requiredForceMagnitude\minimumDirectionSet,
 \quad
 \minimum_{\vectorNorm{\forceCoordinates}=\requiredForceMagnitude}
 \forceValue(\forceCoordinates)
 =\requiredForceMagnitude^{\homogeneityDegree}
 \minimumDirectionalValue
\]
\vspace{-0.5em}

{\small Lift each minimizing wrench to its optimal actuator outputs}
\[
 \minimumForceDirection\in\minimumDirectionSet,
 \quad
 \actuatorOutputArray_{\star}
 \in
 \minimumOutputSet\!\left(
 \zeroMomentEmbedding(
 \requiredForceMagnitude\minimumForceDirection)
 \right)
\]

\end{minipage}
\caption{Complete geometric construction. Fiberwise infimization transports the
actuator-output cost from the actuator-output space to the wrench space.
Restriction to zero-moment wrenches and unit force directions yields the
directional infimal allocation cost. Positive homogeneity makes every spherical
section of the zero-moment force cone a radially scaled copy of the unit section,
and optimal actuator outputs are recovered by lifting the minimizing wrenches
through the minimum-output map.}
\label{fig:complete-geometric-construction}
\end{figure}

\fi

\section{Emergent Underactuation}
\label{sec:emergent-underactuation}

To embed the allocation model in the standard multirotor rigid-body
dynamics \cite{Michieletto2018TRO}, let $\WorldFrame$ and $\BodyFrame$
denote the world and body frames. The attitude of $\BodyFrame$ relative
to $\WorldFrame$ is represented by $\attitude\in\RotationGroup$.
Accordingly, if a free vector has coordinate array $\forceCoordinates$
in $\BodyFrame$, its coordinate array in $\WorldFrame$ is
$\attitude\forceCoordinates$.
The third axis of \(\WorldFrame\) is vertical and oriented opposite to the
gravitational acceleration. Its coordinate array in \(\WorldFrame\) is
\(\worldVertical=\worldVerticalCoordinates\), so the gravitational acceleration
has coordinate array \(-\gravityAcceleration\worldVertical\), with
\(\gravityAcceleration>0\). Let
\(\vehicleMass\) denote the  vehicle mass.
Let
\(\positionTrajectory:[0,\timeHorizon]
\to\RealNumbers^{\spatialDimension}\)
be a prescribed twice continuously differentiable position trajectory, where
\(\positionTrajectory(\timeCoordinate)\) is the position coordinate array of
the body-frame origin expressed in \(\WorldFrame\). The required
 total
propeller force trajectory expressed in \(\WorldFrame\) is
 \(\requiredWorldForce:[0,\timeHorizon]
\to\RealNumbers^{\spatialDimension}\)  defined by
\begin{equation}
 \requiredWorldForce(\timeCoordinate)
 :=\vehicleMass\ddot{\positionTrajectory}(\timeCoordinate)
 +\vehicleMass\gravityAcceleration\worldVertical.
 \label{eq:required-world-force}
\end{equation}
Assume that \(\requiredWorldForce(\timeCoordinate)\neq\zeroForceCoordinates\)
for every \(\timeCoordinate\in[0,\timeHorizon]\). Define
\(\requiredForceMagnitude:[0,\timeHorizon]\to\PositiveReals\) and
\(\requiredForceDirection:[0,\timeHorizon]\to\UnitSphere\) by
\begin{equation}
 \requiredForceMagnitude(\timeCoordinate)
 :=\vectorNorm{\requiredWorldForce(\timeCoordinate)},
 \quad
 \requiredForceDirection(\timeCoordinate)
 :=\requiredWorldForce(\timeCoordinate)\;/\;
\requiredForceMagnitude(\timeCoordinate).
 \label{eq:required-force-decomposition}
\end{equation}
For an attitude trajectory
\(\attitude:[0,\timeHorizon]
\to\RotationGroup\), the same force in body
coordinates is \(\requiredBodyForce:[0,\timeHorizon]
\to\RealNumbers^{\spatialDimension}\) defined by
\begin{equation}
 \requiredBodyForce(\timeCoordinate)
 =\attitude(\timeCoordinate)^{\transpose}
 \requiredWorldForce(\timeCoordinate)
 =\requiredForceMagnitude(\timeCoordinate)
 \attitude(\timeCoordinate)^{\transpose}
 \requiredForceDirection(\timeCoordinate).
 \label{eq:required-body-force}
\end{equation}
Thus, the position trajectory fixes the force magnitude $\requiredForceMagnitude$ and world-frame
direction $\requiredForceDirection$, while attitude trajectory selects its body-frame direction \(\attitude(\timeCoordinate)^{\transpose}
 \requiredForceDirection(\timeCoordinate)\).
For each time, define
\begin{equation}
 \optimalAttitudeSet(\timeCoordinate)
 :=\left\{
 \attitude\in\RotationGroup:
 \attitude^{\transpose}\requiredForceDirection(\timeCoordinate)
 \in\minimumDirectionSet
 \right\}.
 \label{eq:minimum-attitude-set}
\end{equation}
Thus,
\(\optimalAttitudeSet:[0,\timeHorizon]
\rightrightarrows\RotationGroup\)
is a set-valued map assigning to each time the attitudes that represent the
required force along a direction in \(\minimumDirectionSet\).

\begin{theorem}[Minimum-cost attitude family]
\label{thm:minimum-cost-attitude-family}
Suppose that the assumptions of
\cref{prop:magnitude-direction-separation} hold and that
$\directionalValue$ is lower semicontinuous on $\UnitSphere$. Then every
\(\optimalAttitude(\timeCoordinate)
\in\optimalAttitudeSet(\timeCoordinate)\) minimizes the instantaneous
zero-moment allocation cost in \(\RotationGroup\):
\begin{equation}
\begin{aligned}
 \forceValue\!\left(
 \optimalAttitude(\timeCoordinate)^{\transpose}
 \requiredWorldForce(\timeCoordinate)\right)
 =
 \minimum_{\attitude\in\RotationGroup}
 \forceValue\!\left(
 \attitude^{\transpose}\requiredWorldForce(\timeCoordinate)\right)
 =\requiredForceMagnitude(\timeCoordinate)^{\homogeneityDegree}
 \minimumDirectionalValue.
\end{aligned}
 \label{eq:instantaneous-attitude-optimum}
\end{equation}
Any measurable selection
\(\optimalAttitude:[0,\timeHorizon]\to\RotationGroup\) satisfying
\(\optimalAttitude(\timeCoordinate)
\in\optimalAttitudeSet(\timeCoordinate)\) almost everywhere and having
integrable allocation cost also minimizes
\begin{equation}
 \trajectoryCostOf{\attitude}{\positionTrajectory}
 :=\int_{0}^{\timeHorizon}
 \forceValue\!\left(
 \attitude(\timeCoordinate)^{\transpose}
 \requiredWorldForce(\timeCoordinate)\right)
 \differentialTime
 \label{eq:trajectory-cost}
\end{equation}
over all measurable attitude trajectories having integrable allocation cost.
\end{theorem}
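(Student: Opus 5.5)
The argument is pointwise in time: first solve the instantaneous problem on $\RotationGroup$ at each fixed $\timeCoordinate$, then integrate the pointwise inequality.

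\emph{Instantaneous optimum.} Fix $\timeCoordinate$. By \cref{eq:required-body-force}, for any $\attitude\in\RotationGroup$ we have $\attitude^{\transpose}\requiredWorldForce(\timeCoordinate)=\requiredForceMagnitude(\timeCoordinate)\unitForceDirection$ with $\unitForceDirection:=\attitude^{\transpose}\requiredForceDirection(\timeCoordinate)\in\UnitSphere$, because rotations preserve norms. Then \cref{eq:magnitude-direction-separation} in \cref{prop:magnitude-direction-separation} gives
\[
\forceValue(\attitude^{\transpose}\requiredWorldForce(\timeCoordinate))=\requiredForceMagnitude(\timeCoordinate)^{\homogeneityDegree}\directionalValue(\attitude^{\transpose}\requiredForceDirection(\timeCoordinate))\geq\requiredForceMagnitude(\timeCoordinate)^{\homogeneityDegree}\minimumDirectionalValue .
\]
The minimum $\minimumDirectionalValue$ exists by compactness of $\UnitSphere$ and lower semicontinuity of $\directionalValue$, so $\minimumDirectionSet\neq\emptyset$. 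Equality holds if and only if $\attitude^{\transpose}\requiredForceDirection(\timeCoordinate)\in\minimumDirectionSet$, that is, $\attitude\in\optimalAttitudeSet(\timeCoordinate)$.

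To show that the infimum over $\RotationGroup$ is attained, so that ``$\min$'' in \cref{eq:instantaneous-attitude-optimum} is justified, we need $\optimalAttitudeSet(\timeCoordinate)\neq\emptyset$. Since $\SO(3)$ acts transitively on $\UnitSphere$, for any $\minimumForceDirection\in\minimumDirectionSet$ there is a rotation $\attitude$ with $\attitude\minimumForceDirection=\requiredForceDirection(\timeCoordinate)$. This yields \cref{eq:instantaneous-attitude-optimum} for every element of $\optimalAttitudeSet(\timeCoordinate)$.

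\emph{Trajectory level.} Let $\optimalAttitude$ be a measurable selection with $\optimalAttitude(\timeCoordinate)\in\optimalAttitudeSet(\timeCoordinate)$ almost everywhere, and let $\attitude$ be any measurable attitude trajectory with integrable allocation cost. By the pointwise step, for almost every $\timeCoordinate$,
\[
\forceValue(\optimalAttitude(\timeCoordinate)^{\transpose}\requiredWorldForce(\timeCoordinate))=\requiredForceMagnitude(\timeCoordinate)^{\homogeneityDegree}\minimumDirectionalValue\leq\forceValue(\attitude(\timeCoordinate)^{\transpose}\requiredWorldForce(\timeCoordinate)).
\]
Both integrands are integrable by hypothesis. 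Monotonicity of the Lebesgue integral then gives $\trajectoryCostOf{\optimalAttitude}{\positionTrajectory}\leq\trajectoryCostOf{\attitude}{\positionTrajectory}$. Both integrands are measurable, either by hypothesis or because the optimal one equals $\requiredForceMagnitude^{\homogeneityDegree}\minimumDirectionalValue$ a.e., which is continuous since $\positionTrajectory\in C^{2}$ and $\requiredWorldForce\neq\zeroForceCoordinates$. The optimal cost is in fact automatically integrable, with value $\minimumDirectionalValue\int_{0}^{\timeHorizon}\requiredForceMagnitude^{\homogeneityDegree}\differentialTime$.

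\emph{Main obstacle.} The only delicate points are measurability issues. Note that the statement assumes the existence of a measurable selection rather than asking us to construct one. If a construction is wanted, I would first try an explicit one: fix $\minimumForceDirection\in\minimumDirectionSet$ and take the minimal-angle rotation sending $\minimumForceDirection$ to $\requiredForceDirection(\timeCoordinate)$, with a fixed choice of axis when $\requiredForceDirection(\timeCoordinate)=-\minimumForceDirection$. This map is Borel, and $\requiredForceDirection$ is continuous, so the composition is measurable.

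Measurability of $\timeCoordinate\mapsto\forceValue(\attitude(\timeCoordinate)^{\transpose}\requiredWorldForce(\timeCoordinate))$ for general $\attitude$ is built into the phrase ``integrable allocation cost.'' Otherwise it would follow from lower semicontinuity of $\forceValue=\requiredForceMagnitude^{\homogeneityDegree}\directionalValue$ on $\RealNumbers^{3}\setminus\{0\}$, which is inherited from $\directionalValue$ through the homogeneity relation.
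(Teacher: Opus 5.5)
Your proposal is correct and follows the paper's proof essentially step for step: rotation invariance of the norm plus magnitude--direction separation gives the pointwise bound $\forceValue(\attitude^{\transpose}\requiredWorldForce)\geq\requiredForceMagnitude^{\homogeneityDegree}\minimumDirectionalValue$, with equality exactly on $\optimalAttitudeSet(\timeCoordinate)$, and integrating this bound yields the trajectory claim. Your extra checks fill in details the paper leaves implicit: nonemptiness of $\optimalAttitudeSet(\timeCoordinate)$ via transitivity of $\RotationGroup$ on $\UnitSphere$ (which justifies writing $\min$), and the measurability and integrability remarks.
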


\begin{proof}
Any \(\attitude\in\RotationGroup\), preserves the force
magnitude. Hence
\(\attitude^{\transpose}\requiredWorldForce
=\requiredForceMagnitude\unitForceDirection\), where
\(\unitForceDirection
=\attitude^{\transpose}\requiredForceDirection\in\UnitSphere\). By
\cref{eq:magnitude-direction-separation},
\begin{equation}
 \forceValue\!\left(\attitude^{\transpose}\requiredWorldForce\right)
 =\requiredForceMagnitude^{\homogeneityDegree}
 \directionalValue(\unitForceDirection)
 \geq\requiredForceMagnitude^{\homogeneityDegree}
 \minimumDirectionalValue.
\end{equation}
Equality holds exactly when
\(\unitForceDirection\in\minimumDirectionSet\), equivalently when
\(\attitude\in\optimalAttitudeSet\). Integrating the pointwise inequality
proves the trajectory statement.
\end{proof}

\begin{corollary}[Emergent underactuation]
\label{cor:emergent-underactuation}
Fix any \(\minimumForceDirection\in\minimumDirectionSet\). Every attitude
trajectory \(\optimalAttitude:[0,\timeHorizon]\to\RotationGroup\)  satisfying
\begin{equation}
 \optimalAttitude(\timeCoordinate)\minimumForceDirection
 =\requiredForceDirection(\timeCoordinate)
 \label{eq:alignment-law}
\end{equation}
is instantaneously minimum cost. Its required body force is
\begin{equation}
 \requiredBodyForce(\timeCoordinate)
 =\requiredForceMagnitude(\timeCoordinate)\minimumForceDirection,
 \label{eq:underactuated-body-force}
\end{equation}
and therefore
\begin{equation}
 \left(\forceProjectorComplement\right)\requiredBodyForce(\timeCoordinate)
 =\zeroForceCoordinates.
 \label{eq:zero-transverse-force}
\end{equation}
\end{corollary}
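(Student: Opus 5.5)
The plan is to reduce the corollary to \cref{thm:minimum-cost-attitude-family} by showing that the alignment law \cref{eq:alignment-law} places the attitude in $\optimalAttitudeSet(\timeCoordinate)$ at every time. Then we read off the body-force structure by direct computation.

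First, fix $\minimumForceDirection\in\minimumDirectionSet$ and let $\optimalAttitude$ satisfy \cref{eq:alignment-law}. Since $\optimalAttitude(\timeCoordinate)\in\RotationGroup$, we have $\optimalAttitude(\timeCoordinate)^{\transpose}\optimalAttitude(\timeCoordinate)=I_{3}$. Multiplying \cref{eq:alignment-law} on the left by $\optimalAttitude(\timeCoordinate)^{\transpose}$ gives
\[
\optimalAttitude(\timeCoordinate)^{\transpose}\requiredForceDirection(\timeCoordinate)=\minimumForceDirection\in\minimumDirectionSet .
\]
By \cref{eq:minimum-attitude-set}, this means $\optimalAttitude(\timeCoordinate)\in\optimalAttitudeSet(\timeCoordinate)$ for every $\timeCoordinate$. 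The first part of \cref{thm:minimum-cost-attitude-family} then yields \cref{eq:instantaneous-attitude-optimum} pointwise, with value $\requiredForceMagnitude(\timeCoordinate)^{\homogeneityDegree}\minimumDirectionalValue$, which is the instantaneous minimum-cost claim. I will keep the statement at the instantaneous level. The trajectory-level claim would additionally need measurability and integrability, and the corollary does not assert it. If desired, one can remark that it follows whenever $\optimalAttitude$ is measurable: the cost $\requiredForceMagnitude^{\homogeneityDegree}\minimumDirectionalValue$ is continuous because $\positionTrajectory$ is $C^{2}$, and is therefore integrable on $[0,\timeHorizon]$.

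Second, substitute into \cref{eq:required-body-force}:
\[
\requiredBodyForce(\timeCoordinate)=\requiredForceMagnitude(\timeCoordinate)\optimalAttitude(\timeCoordinate)^{\transpose}\requiredForceDirection(\timeCoordinate)=\requiredForceMagnitude(\timeCoordinate)\minimumForceDirection,
\]
which is \cref{eq:underactuated-body-force}. Finally, $\minimumForceDirection\in\UnitSphere$ gives $\minimumForceDirection^{\transpose}\minimumForceDirection=1$. Hence $(\forceProjectorComplement)\minimumForceDirection=\minimumForceDirection-\minimumForceDirection=\zeroForceCoordinates$, and multiplying by $\requiredForceMagnitude(\timeCoordinate)$ gives \cref{eq:zero-transverse-force}.

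There is no real obstacle here. The only point needing care is to invoke the hypotheses of \cref{thm:minimum-cost-attitude-family}: homogeneity, a cone output space, and lower semicontinuity, which ensures that $\minimumDirectionSet$ is nonempty so that a choice of $\minimumForceDirection$ exists. One may also note that such $\optimalAttitude$ exist, since for each $\timeCoordinate$ some rotation maps the unit vector $\minimumForceDirection$ to $\requiredForceDirection(\timeCoordinate)$. Existence is not required, however, because the corollary quantifies over trajectories satisfying the alignment law.
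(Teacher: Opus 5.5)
Your proposal is correct and follows the paper's proof: left-multiplying the alignment law by $\optimalAttitude(\timeCoordinate)^{\transpose}$ gives $\optimalAttitude(\timeCoordinate)^{\transpose}\requiredForceDirection(\timeCoordinate)=\minimumForceDirection\in\minimumDirectionSet$, which places the attitude in $\optimalAttitudeSet(\timeCoordinate)$ and lets you invoke \cref{thm:minimum-cost-attitude-family}. Substituting into \cref{eq:required-body-force} and using $\minimumForceDirection^{\transpose}\minimumForceDirection=1$ then gives the body-force structure and the zero transverse component, exactly as the paper does. Your side remarks on measurability, integrability and existence are accurate but not needed for the statement.
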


\begin{proof}
Because \(\minimumForceDirection\in\minimumDirectionSet\),
\cref{eq:alignment-law} implies
\(\optimalAttitude\in\optimalAttitudeSet\) and
\(\optimalAttitude^{\transpose}\requiredForceDirection
=\minimumForceDirection\). Substitution into
\cref{eq:required-body-force} proves
\cref{eq:underactuated-body-force}. Since
\(\forceProjectorComplement\minimumForceDirection
=\zeroForceCoordinates\), \cref{eq:zero-transverse-force} follows.
\end{proof}

\begin{remark}[Underactuation induced by optimality]
Equation~\eqref{eq:zero-transverse-force} is the force constraint of a platform
that can generate force only along the fixed body direction
\(\minimumForceDirection\). This one-dimensional force structure is not
imposed by the actuation rank. It is selected by minimum cost from the complete
force authority of a fully actuated platform.
\end{remark}

\begin{remark}[Nonunique minimizing directions]
The set \(\minimumDirectionSet\) need not be a singleton. A fixed
underactuated-like structure is obtained by selecting any
\(\minimumForceDirection\in\minimumDirectionSet\) and using the same body-fixed
direction throughout the motion. A time-varying selection remains
instantaneously minimum cost but need not confine the body force to one fixed
body-frame subspace.
\end{remark}

\begin{remark}[Regularity and dynamic realizability]
For fixed $\minimumForceDirection$ and nonvanishing required force, a
$C^3$ position trajectory permits a $C^1$ minimum-cost attitude
selection and continuous angular velocity, while a $C^4$ trajectory
permits a $C^2$ selection and continuous angular acceleration. Dynamic
realizability additionally requires satisfaction of the angular-rate,
moment, and actuator constraints, which lie outside the instantaneous
zero-moment allocation result.
\end{remark}

\subsection{Illustrative example and parallel-thrust interpretation}

\begin{example}[Planar transfer]
Consider
$\positionTrajectory(\timeCoordinate)
=\horizontalPosition(\timeCoordinate)\worldAxisOne
+\constantAltitude\worldVertical$
and suppose that the chosen allocation geometry and cost yield
$\bodyAxisThree\in\minimumDirectionSet$. Selecting
$\minimumForceDirection=\bodyAxisThree$ gives
\begin{equation}
\begin{aligned}
 \requiredWorldForce(\timeCoordinate)
 &=\vehicleMass\horizontalAcceleration(\timeCoordinate)\worldAxisOne
 +\vehicleMass\gravityAcceleration\worldVertical,
 \\
 \requiredForceMagnitude(\timeCoordinate)
 &=\vehicleMass\sqrt{\gravityAcceleration^{2}
 +\horizontalAcceleration(\timeCoordinate)^{2}}.
\end{aligned}
 \label{eq:planar-required-force}
\end{equation}
The minimum-cost attitude satisfies
\begin{equation}
 \optimalAttitude(\timeCoordinate)\bodyAxisThree
 =\left(\horizontalAcceleration(\timeCoordinate)\worldAxisOne
 +\gravityAcceleration\worldVertical\right)\big/
 \left(\sqrt{\gravityAcceleration^{2}
 +\horizontalAcceleration(\timeCoordinate)^{2}}\right).
 \label{eq:planar-alignment}
\end{equation}
By contrast, let $\horizontalAttitude\in\RotationGroup$ satisfy
$\horizontalAttitude\bodyAxisOne=\worldAxisOne$ and
$\horizontalAttitude\bodyAxisThree=\worldVertical$. Then
\begin{equation}
 \horizontalAttitude^{\transpose}
 \requiredWorldForce(\timeCoordinate)
 =\vehicleMass\horizontalAcceleration(\timeCoordinate)\bodyAxisOne
 +\vehicleMass\gravityAcceleration\bodyAxisThree.
 \label{eq:horizontal-body-force}
\end{equation}
This realization has no smaller instantaneous allocation cost and is
strictly more costly whenever its unit body-force direction does not
belong to $\minimumDirectionSet$. The familiar tilting motion of an
underactuated multirotor is therefore recovered as a minimum-cost
realization of the same prescribed translation on a fully actuated
platform.
\end{example}

\begin{remark}[Parallel-thrust lower bound]
\label{rem:parallel-thrust-lower-bound}
For the summed nonnegative actuator-output cost
$\actuatorCost(\actuatorOutputArray)
=\onesArray^{\transpose}\actuatorOutputArray$, the triangle inequality
gives
$\forceValue(\forceCoordinates)\geq
\vectorNorm{\forceCoordinates}$.
Equality holds when a feasible zero-moment allocation uses only rotor
forces parallel and codirectional with $\forceCoordinates$. Hence, a
parallel-axis platform aligned with the required world-frame force
attains
$\int_{0}^{\timeHorizon}
\vectorNorm{\requiredWorldForce(\timeCoordinate)}
\differentialTime$,
the universal lower bound for the same ideal unidirectional force
generators. This comparison excludes differences in vehicle mass,
rotor aerodynamics, actuator limits, and attitude-transition cost.
\end{remark}

\section{Cost Ordering of Full-Pose Trajectories}
\label{sec:trajectory-ordering}

For a prescribed position trajectory $\positionTrajectory$, let
$\AttitudeTrajectorySpace$ be the set of measurable maps
$\attitude:[0,\timeHorizon]\to\RotationGroup$ for which
$\trajectoryCostOf{\attitude}{\positionTrajectory}$ is finite. Each
$\attitude\in\AttitudeTrajectorySpace$ parametrizes the full-pose
trajectory $(\positionTrajectory,\attitude):[0,\timeHorizon]\to\RealNumbers^{\spatialDimension}\times\RotationGroup$.

\begin{definition}[Allocation optimality gap]
\label{def:allocation-optimality-gap}
For every $\attitude\in\AttitudeTrajectorySpace$, the instantaneous
allocation optimality gap is
\begin{equation}
 \instantaneousGapOf{\attitude}(\timeCoordinate)
 :=
 \requiredForceMagnitude(\timeCoordinate)^{\homogeneityDegree}
 \left[
 \directionalValue\!\left(
 \attitude(\timeCoordinate)^{\transpose}
 \requiredForceDirection(\timeCoordinate)\right)
 -
 \minimumDirectionalValue
 \right].
 \label{eq:instantaneous-gap}
\end{equation}
Its accumulated value is
\begin{equation}
 \accumulatedGapOf{\attitude}{\positionTrajectory}
 :=
 \int_{0}^{\timeHorizon}
 \instantaneousGapOf{\attitude}(\timeCoordinate)
 \differentialTime.
 \label{eq:accumulated-gap}
\end{equation}
\end{definition}

In general, \(\instantaneousGapOf{\attitude}(\timeCoordinate)\geq 0\) and
\(\instantaneousGapOf{\attitude}(\timeCoordinate)=0
\Leftrightarrow
\attitude(\timeCoordinate)^{\transpose}
\requiredForceDirection(\timeCoordinate)\in\minimumDirectionSet\).

\begin{proposition}[Exact allocation-cost decomposition]
\label{prop:exact-cost-decomposition}
Every $\attitude\in\AttitudeTrajectorySpace$ satisfies
\begin{equation}
 \trajectoryCostOf{\attitude}{\positionTrajectory}
 =
 \minimumTrajectoryCostOf{\positionTrajectory}
 +
 \accumulatedGapOf{\attitude}{\positionTrajectory},
 \label{eq:cost-decomposition}
\end{equation}
where
\begin{equation}
 \minimumTrajectoryCostOf{\positionTrajectory}
 :=
 \minimumDirectionalValue
 \int_{0}^{\timeHorizon}
 \requiredForceMagnitude(\timeCoordinate)^{\homogeneityDegree}
 \differentialTime.
 \label{eq:minimum-trajectory-cost}
\end{equation}
\end{proposition}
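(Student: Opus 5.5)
The plan is to establish the decomposition pointwise in time and then integrate, taking care that every integral involved is finite so that linearity of the integral applies without any $\infty-\infty$ issue.

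\emph{Pointwise identity.} Fix $\attitude\in\AttitudeTrajectorySpace$ and $\timeCoordinate\in[0,\timeHorizon]$. By \cref{eq:required-body-force}, $\attitude(\timeCoordinate)^{\transpose}\requiredWorldForce(\timeCoordinate)=\requiredForceMagnitude(\timeCoordinate)\unitForceDirection(\timeCoordinate)$ with $\unitForceDirection(\timeCoordinate):=\attitude(\timeCoordinate)^{\transpose}\requiredForceDirection(\timeCoordinate)\in\UnitSphere$, since rotations preserve norms. Applying \cref{eq:magnitude-direction-separation} of \cref{prop:magnitude-direction-separation} gives
\begin{equation*}
 \forceValue\!\left(\attitude(\timeCoordinate)^{\transpose}\requiredWorldForce(\timeCoordinate)\right)
 =\requiredForceMagnitude(\timeCoordinate)^{\homogeneityDegree}\directionalValue(\unitForceDirection(\timeCoordinate))
 =\requiredForceMagnitude(\timeCoordinate)^{\homogeneityDegree}\minimumDirectionalValue
 +\instantaneousGapOf{\attitude}(\timeCoordinate),
\end{equation*}
where the second equality is just adding and subtracting $\requiredForceMagnitude^{\homogeneityDegree}\minimumDirectionalValue$ and recognizing \cref{eq:instantaneous-gap}. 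This step is purely algebraic.

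\emph{Integrability.} Since $\positionTrajectory$ is $C^2$, $\requiredWorldForce$ is continuous on the compact interval $[0,\timeHorizon]$. Hence $\requiredForceMagnitude$ is continuous and bounded, so $\requiredForceMagnitude^{\homogeneityDegree}$ is integrable and $\minimumTrajectoryCostOf{\positionTrajectory}$ in \cref{eq:minimum-trajectory-cost} is finite (with $\minimumDirectionalValue$ finite by \cref{ass:standing-actuation}). The left-hand side integrand is measurable with finite integral by membership of $\attitude$ in $\AttitudeTrajectorySpace$. Consequently $\instantaneousGapOf{\attitude}$, being the difference of two integrable functions, is measurable and integrable, and it is nonnegative because $\minimumDirectionalValue$ is the minimum of $\directionalValue$. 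Integrating the pointwise identity over $[0,\timeHorizon]$ and using linearity yields \cref{eq:cost-decomposition}.

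The only point needing care is measurability of $\instantaneousGapOf{\attitude}$. Here $\directionalValue$ is only lower semicontinuous, so measurability of $\directionalValue\circ\unitForceDirection$ is not immediate from that of $\attitude$. The plan sidesteps this: rather than proving it directly, write $\instantaneousGapOf{\attitude}$ as the integrand of $\trajectoryCostOf{\attitude}{\positionTrajectory}$ minus the continuous function $\requiredForceMagnitude^{\homogeneityDegree}\minimumDirectionalValue$. The former integrand is implicitly measurable, since $\trajectoryCostOf{\attitude}{\positionTrajectory}$ is assumed defined and finite. Alternatively, lower semicontinuity makes $\directionalValue$ Borel, so its composition with the measurable map $\unitForceDirection$ is measurable.
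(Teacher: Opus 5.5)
Your proof is correct and matches the paper's argument: magnitude--direction separation writes the integrand as $\requiredForceMagnitude^{\homogeneityDegree}\directionalValue(\attitude^{\transpose}\requiredForceDirection)$, and adding and subtracting $\requiredForceMagnitude^{\homogeneityDegree}\minimumDirectionalValue$ before integrating yields the decomposition. Your extra checks are sound and make explicit what the paper leaves implicit: finiteness of $\minimumTrajectoryCostOf{\positionTrajectory}$ from continuity of $\requiredForceMagnitude$ on $[0,\timeHorizon]$ (so linearity applies with no $\infty-\infty$ issue), and measurability of the gap because the lower semicontinuous $\directionalValue$ is Borel.
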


\begin{proof}
Magnitude--direction separation gives the integrand as
$\requiredForceMagnitude(\timeCoordinate)^{\homogeneityDegree}$ times
the directional infimal allocation cost evaluated at
$\attitude(\timeCoordinate)^{\transpose}
\requiredForceDirection(\timeCoordinate)$. Adding and subtracting
$\requiredForceMagnitude(\timeCoordinate)^{\homogeneityDegree}
\minimumDirectionalValue$ and integrating proves
\cref{eq:cost-decomposition}.
\end{proof}

The decomposition separates the unavoidable minimum cost intrinsic to
the position motion from the exact attitude-dependent penalty. The
former is common to all full-pose trajectories sharing
$\positionTrajectory$, while the accumulated gap orders their attitude
choices under the instantaneous zero-moment allocation model.

\begin{definition}[Cost-induced preorder at fixed position motion]
\label{def:cost-induced-attitude-preorder}
For $\attitudeOne,\attitudeTwo\in\AttitudeTrajectorySpace$, define
\begin{equation}
\begin{aligned}
 \attitudeOne\attitudePreorder\attitudeTwo
 \quad\Longleftrightarrow\quad&
 \accumulatedGapOf{\attitudeOne}{\positionTrajectory}
 \leq
 \accumulatedGapOf{\attitudeTwo}{\positionTrajectory}.
\end{aligned}
 \label{eq:attitude-preorder}
\end{equation}
They are cost equivalent when
\begin{equation}
\begin{aligned}
 \attitudeOne\attitudeCostEquivalence\attitudeTwo
 \quad\Longleftrightarrow\quad&
 \accumulatedGapOf{\attitudeOne}{\positionTrajectory}
 =
 \accumulatedGapOf{\attitudeTwo}{\positionTrajectory}.
\end{aligned}
 \label{eq:attitude-cost-equivalence}
\end{equation}
\end{definition}

\begin{proposition}[Preorder and induced quotient order]
\label{prop:cost-ordering}
The relation $\attitudePreorder$ is a total preorder on
$\AttitudeTrajectorySpace$ and satisfies
\begin{equation}
\begin{aligned}
 \attitudeOne\attitudePreorder\attitudeTwo
 \quad\Longleftrightarrow\quad&
 \trajectoryCostOf{\attitudeOne}{\positionTrajectory}
 \leq
 \trajectoryCostOf{\attitudeTwo}{\positionTrajectory}.
\end{aligned}
 \label{eq:cost-order-equivalence}
\end{equation}
The relation $\attitudeCostEquivalence$ partitions the corresponding
full-pose trajectories into equal-cost classes. The induced relation
\begin{equation}
 [\attitudeOne]\attitudeClassOrder[\attitudeTwo]
 \quad\Longleftrightarrow\quad
 \attitudeOne\attitudePreorder\attitudeTwo
 \label{eq:quotient-cost-order}
\end{equation}
is a total order on $\AttitudeQuotientSpace$.
\end{proposition}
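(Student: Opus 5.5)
The plan is to reduce everything to the real-valued map $\attitude\mapsto\accumulatedGapOf{\attitude}{\positionTrajectory}$ on $\AttitudeTrajectorySpace$. The preorder is the pullback of the usual order on $\RealNumbers$ under this map, and the equivalence is its kernel relation. I first check that this map is well defined and finite on $\AttitudeTrajectorySpace$.

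By \cref{prop:exact-cost-decomposition}, $\accumulatedGapOf{\attitude}{\positionTrajectory}=\trajectoryCostOf{\attitude}{\positionTrajectory}-\minimumTrajectoryCostOf{\positionTrajectory}$. Here $\minimumTrajectoryCostOf{\positionTrajectory}$ is finite because $\requiredForceMagnitude$ is continuous on the compact interval $[0,\timeHorizon]$. Moreover, $\trajectoryCostOf{\attitude}{\positionTrajectory}$ is finite by the definition of $\AttitudeTrajectorySpace$. So the gap is a finite real number, and it is nonnegative because $\instantaneousGapOf{\attitude}\geq0$.

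With this in hand, totality, reflexivity and transitivity of $\attitudePreorder$ are inherited directly from $\leq$ on $\RealNumbers$.

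For \cref{eq:cost-order-equivalence}, I add the common finite constant $\minimumTrajectoryCostOf{\positionTrajectory}$ to both sides of $\accumulatedGapOf{\attitudeOne}{\positionTrajectory}\leq\accumulatedGapOf{\attitudeTwo}{\positionTrajectory}$. Since this constant depends only on $\positionTrajectory$, \cref{eq:cost-decomposition} turns the inequality into the corresponding inequality of costs, and the step reverses the same way.

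The relation $\attitudeCostEquivalence$ coincides with $\attitudeOne\attitudePreorder\attitudeTwo$ and $\attitudeTwo\attitudePreorder\attitudeOne$ together. It is an equivalence relation because it is equality of the values of a function. Its classes are therefore level sets of the gap, equivalently of $J$, so they partition the full-pose trajectories into equal-cost classes.

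For the quotient, I first show $\attitudeClassOrder$ is well defined. If $\attitudeOne\attitudeCostEquivalence\attitudeOne'$ and $\attitudeTwo\attitudeCostEquivalence\attitudeTwo'$, the gap values agree, so the defining inequality is unchanged. Reflexivity, transitivity and totality then descend from the preorder. Antisymmetry holds because $[\attitudeOne]\attitudeClassOrder[\attitudeTwo]$ and $[\attitudeTwo]\attitudeClassOrder[\attitudeOne]$ force equal gaps, hence $[\attitudeOne]=[\attitudeTwo]$. In effect, the quotient embeds injectively and order-preservingly into $(\NonnegativeReals,\leq)$.

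There is no real obstacle here. The only point needing care is the finiteness check at the start, which ensures the subtraction in \cref{eq:cost-decomposition} is legitimate and that no $\infty-\infty$ issue arises. Measurability of $\timeCoordinate\mapsto\directionalValue(\attitude(\timeCoordinate)^{\transpose}\requiredForceDirection(\timeCoordinate))$ is implicitly assumed through the finiteness of $J$ in the definition of $\AttitudeTrajectorySpace$. I will state that assumption explicitly rather than prove it from \cref{ass:directional-regularity}.
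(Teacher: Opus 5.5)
Your proposal is correct and follows the paper's proof: you inherit the total preorder from the order on the accumulated gaps, use \cref{eq:cost-decomposition} with the common term $\minimumTrajectoryCostOf{\positionTrajectory}$ to obtain \cref{eq:cost-order-equivalence}, and obtain antisymmetry on the quotient because equal-gap trajectories are identified. The paper leaves two details implicit that you make explicit: the finiteness of $\minimumTrajectoryCostOf{\positionTrajectory}$, which rules out $\infty-\infty$, and the check that $\attitudeClassOrder$ is well defined on classes.
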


\begin{proof}
Ordering the accumulated gaps gives a total preorder. Since
\cref{eq:cost-decomposition} adds the common term
$\minimumTrajectoryCostOf{\positionTrajectory}$, it also proves
\cref{eq:cost-order-equivalence}. Identifying equal-gap trajectories
makes \cref{eq:quotient-cost-order} antisymmetric and therefore a total
order.
\end{proof}

\begin{proposition}[Minimum equivalence class]
\label{prop:minimum-equivalence-class}
Define
\begin{equation}
\begin{aligned}
 \OptimalAttitudeTrajectorySet
 &:=
 {\bigl\{}
 \attitude\in\AttitudeTrajectorySpace:
 \attitude(\timeCoordinate)^{\transpose}
 \requiredForceDirection(\timeCoordinate)
 \in\minimumDirectionSet,
 \\
 &\hspace{2.8em}
 \text{for almost every }
 \timeCoordinate\in[0,\timeHorizon]
 {\bigr\}}.
\end{aligned}
 \label{eq:optimal-attitude-trajectory-set}
\end{equation}
Then
\begin{equation}
\begin{aligned}
 \attitude\in\OptimalAttitudeTrajectorySet
 \quad\Longleftrightarrow\quad&
 \accumulatedGapOf{\attitude}{\positionTrajectory}
 =
 \zeroScalar
 \\
 \quad\Longleftrightarrow\quad&
 \trajectoryCostOf{\attitude}{\positionTrajectory}
 =
 \minimumTrajectoryCostOf{\positionTrajectory}.
\end{aligned}
 \label{eq:minimum-class-characterization}
\end{equation}
Consequently, $\OptimalAttitudeTrajectorySet$ parametrizes the minimum
full-pose equivalence class over $\positionTrajectory$.
\end{proposition}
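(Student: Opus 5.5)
The plan is to prove the two equivalences in \cref{eq:minimum-class-characterization} separately. The second follows directly from \cref{prop:exact-cost-decomposition}; the first is the standard fact that a nonnegative integrable function has zero integral exactly when it vanishes almost everywhere.

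\emph{Second equivalence.} Since $\minimumTrajectoryCostOf{\positionTrajectory}$ is finite for $\attitude\in\AttitudeTrajectorySpace$, the decomposition \cref{eq:cost-decomposition} lets us subtract it from both sides. This gives $\trajectoryCostOf{\attitude}{\positionTrajectory}=\minimumTrajectoryCostOf{\positionTrajectory}$ if and only if $\accumulatedGapOf{\attitude}{\positionTrajectory}=\zeroScalar$. Finiteness holds because $\requiredForceMagnitude$ is continuous and positive on the compact interval $[0,\timeHorizon]$, by the $C^2$ assumption on $\positionTrajectory$.

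\emph{First equivalence.} By \cref{def:allocation-optimality-gap} and the remark following it, $\instantaneousGapOf{\attitude}(\timeCoordinate)\geq0$ everywhere. Moreover, since $\requiredForceMagnitude(\timeCoordinate)>0$, we have $\instantaneousGapOf{\attitude}(\timeCoordinate)=0$ exactly when $\attitude(\timeCoordinate)^{\transpose}\requiredForceDirection(\timeCoordinate)\in\minimumDirectionSet$.

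Next I would check that $\instantaneousGapOf{\attitude}$ is measurable and integrable. It equals the integrand of $\trajectoryCostOf{\attitude}{\positionTrajectory}$ minus the continuous function $\requiredForceMagnitude^{\homogeneityDegree}\minimumDirectionalValue$. The integrand is integrable because $\attitude\in\AttitudeTrajectorySpace$.

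\begin{itemize}
\item If $\attitude\in\OptimalAttitudeTrajectorySet$, the gap vanishes almost everywhere, so its integral is zero.
\item Conversely, a nonnegative measurable function with zero Lebesgue integral vanishes almost everywhere (Markov's inequality applied to the sets $\{\instantaneousGapOf{\attitude}>1/k\}$). The pointwise characterization above then gives $\attitude(\timeCoordinate)^{\transpose}\requiredForceDirection(\timeCoordinate)\in\minimumDirectionSet$ almost everywhere.
\end{itemize}

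\emph{Closing statement.} Combining the two equivalences with \cref{prop:cost-ordering}, $\OptimalAttitudeTrajectorySet$ is precisely the $\attitudeCostEquivalence$-class with gap zero. This class is the least element of the total order $\attitudeClassOrder$, since every gap is nonnegative. It is nonempty: the measurable selections provided by \cref{thm:minimum-cost-attitude-family}, or the alignment law in \cref{cor:emergent-underactuation}, belong to it.

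\emph{Main obstacle.} The only delicate point is measurability of $\timeCoordinate\mapsto\directionalValue(\attitude(\timeCoordinate)^{\transpose}\requiredForceDirection(\timeCoordinate))$. I would handle it as follows:
\begin{itemize}
\item By \cref{ass:directional-regularity}, $\directionalValue$ is lower semicontinuous, hence Borel.
\item The map $\timeCoordinate\mapsto\attitude(\timeCoordinate)^{\transpose}\requiredForceDirection(\timeCoordinate)$ is measurable, being built from the measurable $\attitude$ and the continuous $\requiredForceDirection$.
\item A Borel function composed with a measurable map is measurable.
\end{itemize}
In addition, the set $\{\timeCoordinate:\attitude(\timeCoordinate)^{\transpose}\requiredForceDirection(\timeCoordinate)\in\minimumDirectionSet\}$ is measurable, because $\minimumDirectionSet$ is closed as a sublevel set of a lower semicontinuous function. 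This makes the ``almost every'' condition in \cref{eq:optimal-attitude-trajectory-set} well posed.
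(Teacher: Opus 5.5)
Your proof is correct and follows the paper's argument. The nonnegative gap $\instantaneousGapOf{\attitude}$ has zero integral iff it vanishes almost everywhere, which, since $\requiredForceMagnitude>0$, is exactly membership in $\OptimalAttitudeTrajectorySet$; the second equivalence then follows from \cref{eq:cost-decomposition}. Your measurability checks, via lower semicontinuity of $\directionalValue$ and closedness of $\minimumDirectionSet$, add rigor that the paper leaves implicit.

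One small correction concerns your extra nonemptiness claim. \cref{thm:minimum-cost-attitude-family} and \cref{cor:emergent-underactuation} only show that such trajectories are optimal; they do not show that a measurable one exists. You should add a line constructing one, for example a Borel rotation taking $\minimumForceDirection$ to $\requiredForceDirection(\timeCoordinate)$, composed with the continuous $\requiredForceDirection$; its cost $\minimumTrajectoryCostOf{\positionTrajectory}$ is finite.
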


\begin{proof}
The nonnegative instantaneous gap has zero integral exactly when it
vanishes almost everywhere. This is equivalent to membership in
$\OptimalAttitudeTrajectorySet$, while the cost equality follows from
\cref{eq:cost-decomposition}.
\end{proof}

\begin{remark}[The price of rotational freedom]
Each full-pose trajectory $(\positionTrajectory,\attitude)$ determines
the body-frame unit-force-direction trajectory
\begin{equation}
 \timeCoordinate
 \longmapsto
 \attitude(\timeCoordinate)^{\transpose}
 \requiredForceDirection(\timeCoordinate)
 \in\UnitSphere.
 \label{eq:body-force-direction-trajectory}
\end{equation}
The accumulated gap is the weighted integral of the excess directional
cost along this spherical trajectory. The minimum class consists
exactly of trajectories remaining in $\minimumDirectionSet$ almost
everywhere. In particular, fixing
$\minimumForceDirection\in\minimumDirectionSet$ and imposing
$\attitude(\timeCoordinate)\minimumForceDirection
=\requiredForceDirection(\timeCoordinate)$ yields a representative.
By \cref{eq:underactuated-body-force,eq:zero-transverse-force}, its body
force remains in $\linearSpan\{\minimumForceDirection\}$, making
emergent underactuation a minimum-cost realization of the prescribed
position motion. Full actuation permits other attitude trajectories
without changing that translational motion, and the accumulated gap gives their exact
additional allocation cost. It thereby quantifies the price of using
the independent rotational freedom provided by full actuation.
\end{remark}

\begin{remark}[Relative allocation optimality gap]
When $\minimumTrajectoryCostOf{\positionTrajectory}>0$, define
\begin{equation}
 \relativeGapOf{\attitude}{\positionTrajectory}
 :=
 \accumulatedGapOf{\attitude}{\positionTrajectory}
 \;/\;
 \minimumTrajectoryCostOf{\positionTrajectory}.
 \label{eq:relative-allocation-gap}
\end{equation}
This dimensionless quantity is the relative cost increase induced by
the attitude component above the minimum allocation cost compatible
with the position motion. Hence,
$100\relativeGapOf{\attitude}{\positionTrajectory}\%$ is the
corresponding percentage increase. It  enables  comparison of full-pose trajectories with different position components. The resulting global ordering and its invariance properties are left for future study.
\end{remark}

\section{Full Actuation as Reserve Wrench Authority}
\label{sec:resilient-full-actuation}

The minimum-cost nominal realization in
\cref{eq:underactuated-body-force} uses a single body-fixed force
direction even though the platform has complete wrench capability.
Thus, transverse force directions remain unused in the nominal
zero-moment allocation. This suggests a complementary interpretation of
full actuation as reserve wrench authority around an
underactuated-like minimum-cost realization.

Consider an external force
$\externalWorldForce(\timeCoordinate)$, expressed in $\WorldFrame$ and
applied without an associated body moment. The disturbed translational
dynamics are
\begin{equation}
 \vehicleMass\ddot{\positionTrajectory}(\timeCoordinate)
 =
 \generatedWorldForce(\timeCoordinate)
 +
 \externalWorldForce(\timeCoordinate)
 -
 \vehicleMass\gravityAcceleration\worldVertical.
 \label{eq:disturbed-translation-dynamics}
\end{equation}
Exact preservation of the prescribed position trajectory requires
\begin{equation}
 \generatedWorldForce(\timeCoordinate)
 =
 \disturbedRequiredWorldForce(\timeCoordinate),
 \label{eq:exact-disturbance-compensation}
\end{equation}
where
\begin{equation}
 \disturbedRequiredWorldForce(\timeCoordinate)
 :=
 \requiredWorldForce(\timeCoordinate)
 -
 \externalWorldForce(\timeCoordinate).
 \label{eq:disturbed-required-force}
\end{equation}
Whenever
$\disturbedRequiredWorldForce(\timeCoordinate)
\neq\zeroForceCoordinates$, define
\begin{equation}
 \disturbedForceMagnitude(\timeCoordinate)
 :=
 \vectorNorm{\disturbedRequiredWorldForce(\timeCoordinate)},
 \quad
 \disturbedForceDirection(\timeCoordinate)
 :=
 {\disturbedRequiredWorldForce(\timeCoordinate)}
\big /{\disturbedForceMagnitude(\timeCoordinate)}.
 \label{eq:disturbed-force-decomposition}
\end{equation}

\begin{remark}
At its current attitude, a fully actuated platform can immediately
generate the disturbed required force. A structurally underactuated
platform can do so only if that force is aligned with its body-fixed
thrust direction. If angular velocity is bounded, any required
reorientation takes nonzero time, during which tracking error may
accumulate. Under exact disturbance knowledge and negligible actuation
and control delays, full actuation removes this delay through its
transverse force authority.
\end{remark}

Minimum cost is defined relative to the adopted allocation cost and the
required force. Thus, the nominal attitude is minimum-cost for the
undisturbed required force, but need not remain so after a disturbance.
The disturbed minimum-allocation attitude family is
\begin{equation}
 \disturbedOptimalAttitudeSet(\timeCoordinate)
 :=
 \left\{
 \attitude\in\RotationGroup:
 \attitude^{\transpose}
 \disturbedForceDirection(\timeCoordinate)
 \in\minimumDirectionSet
 \right\}.
 \label{eq:disturbed-optimal-attitude-set}
\end{equation}

\begin{proposition}[Reserve-authority cost decomposition]
\label{prop:reserve-authority-cost-decomposition}
Let $\optimalAttitude(\timeCoordinate)$ be minimum-cost for the nominal
required force. Full actuation permits generation of
$\disturbedRequiredWorldForce(\timeCoordinate)$ at this attitude without
prior reorientation. Define
\begin{equation}
\begin{aligned}
 \fixedAttitudeGap(\timeCoordinate)
 &:=
 \disturbedForceMagnitude(\timeCoordinate)^{\homogeneityDegree}
 \bigl[
 \directionalValue\!\left(
 \optimalAttitude(\timeCoordinate)^{\transpose}
 \disturbedForceDirection(\timeCoordinate)
 \right)
 -
 \minimumDirectionalValue
 \bigr].
\end{aligned}
 \label{eq:fixed-attitude-disturbance-gap}
\end{equation}
Then
\begin{equation}
 \forceValue\!\left(
 \optimalAttitude(\timeCoordinate)^{\transpose}
 \disturbedRequiredWorldForce(\timeCoordinate)
 \right)
 =
 \disturbedForceMagnitude(\timeCoordinate)^{\homogeneityDegree}
 \minimumDirectionalValue
 +
 \fixedAttitudeGap(\timeCoordinate).
 \label{eq:disturbed-cost-decomposition}
\end{equation}
Moreover, $\fixedAttitudeGap(\timeCoordinate)\geq0$, with equality if
and only if
$\optimalAttitude(\timeCoordinate)
\in\disturbedOptimalAttitudeSet(\timeCoordinate)$.
\end{proposition}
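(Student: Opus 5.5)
The plan mirrors the proof of \cref{thm:minimum-cost-attitude-family}, but evaluates it at the disturbed required force and the fixed nominal attitude. First, feasibility without reorientation. By \cref{ass:standing-actuation}, every fiber is nonempty. In particular, the fiber of the zero-moment wrench $\zeroMomentEmbedding(\optimalAttitude(\timeCoordinate)^{\transpose}\disturbedRequiredWorldForce(\timeCoordinate))$ contains some actuator output. This output generates the body force $\optimalAttitude(\timeCoordinate)^{\transpose}\disturbedRequiredWorldForce(\timeCoordinate)$, whose world-frame coordinates are exactly $\disturbedRequiredWorldForce(\timeCoordinate)$. Hence the attitude $\optimalAttitude(\timeCoordinate)$ need not change, and the value $\forceValue(\optimalAttitude(\timeCoordinate)^{\transpose}\disturbedRequiredWorldForce(\timeCoordinate))$ is finite.

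Next, the decomposition. Since $\optimalAttitude(\timeCoordinate)\in\RotationGroup$ preserves norms, write
\[
\optimalAttitude(\timeCoordinate)^{\transpose}\disturbedRequiredWorldForce(\timeCoordinate)=\disturbedForceMagnitude(\timeCoordinate)\,\unitForceDirection,
\quad
\unitForceDirection:=\optimalAttitude(\timeCoordinate)^{\transpose}\disturbedForceDirection(\timeCoordinate)\in\UnitSphere,
\]
using \cref{eq:disturbed-force-decomposition}. Then apply \cref{eq:magnitude-direction-separation} of \cref{prop:magnitude-direction-separation} to obtain $\forceValue(\disturbedForceMagnitude\unitForceDirection)=\disturbedForceMagnitude^{\homogeneityDegree}\directionalValue(\unitForceDirection)$. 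Adding and subtracting $\disturbedForceMagnitude^{\homogeneityDegree}\minimumDirectionalValue$ yields \cref{eq:disturbed-cost-decomposition} directly from the definition \cref{eq:fixed-attitude-disturbance-gap}.

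For the sign and equality statement, $\minimumDirectionalValue$ is the minimum of $\directionalValue$ on $\UnitSphere$, so the bracket is nonnegative. Also $\disturbedForceMagnitude(\timeCoordinate)>0$ wherever the decomposition is defined. Hence $\fixedAttitudeGap(\timeCoordinate)\geq0$. Equality holds iff the bracket vanishes, since the positive factor can be divided out. That happens iff $\unitForceDirection\in\minimumDirectionSet$ by \cref{eq:minimum-direction-set}, which is precisely the defining condition of $\disturbedOptimalAttitudeSet(\timeCoordinate)$ in \cref{eq:disturbed-optimal-attitude-set}.

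The only delicate point is the standing hypothesis $\disturbedRequiredWorldForce(\timeCoordinate)\neq\zeroForceCoordinates$. It is needed both for $\disturbedForceDirection$ to be defined and for the equivalence in the equality case, which fails if $\disturbedForceMagnitude=0$. I would state explicitly that the proposition is asserted at such times, with the assumptions of \cref{prop:magnitude-direction-separation} in force. The nominal optimality of $\optimalAttitude$ plays no role beyond fixing the attitude, so the same argument holds for any fixed attitude.
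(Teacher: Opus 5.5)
Your proof is correct and follows the paper's argument. It applies magnitude--direction separation at the fixed nominal attitude, adds and subtracts $r_e^{q}\gamma_\star$, and reads off nonnegativity and the equality case from the definitions of $\gamma_\star$ and $\mathcal{U}_\star$. Your justification of feasibility via nonempty fibers is a sound addition the paper leaves implicit, as is your observation that the equality characterization needs $r_e(t)>0$.
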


\begin{proof}
Magnitude--direction separation gives the left-hand side of
\cref{eq:disturbed-cost-decomposition} as
$\disturbedForceMagnitude^{\homogeneityDegree}$
times
$\directionalValue(
\optimalAttitude^{\transpose}\disturbedForceDirection)$.
Adding and subtracting
$\disturbedForceMagnitude^{\homogeneityDegree}
\minimumDirectionalValue$
gives the decomposition, whose second term has the form of the
allocation optimality gap in \cref{eq:instantaneous-gap}.
\Cref{eq:minimum-directional-value,eq:minimum-direction-set} prove its
nonnegativity and zero condition.
\end{proof}

\section{Numerical Illustration}
\label{sec:numerical-illustration}

We use a planar restriction of the three-dimensional model to compare three realizations of the same horizontal position trajectory:
 a parallel-thrust underactuated platform (UA), a
fully actuated platform rotating toward the instantaneous
minimum-allocation attitude (FA-opt), and a fully actuated platform
maintaining the constant nonoptimal pitch $20\,\mathrm{deg}$ (FA-const).
The fully actuated platforms generate the complete commanded force
instantaneously, whereas UA generates its projection onto the current
body-fixed thrust direction. The simulation uses a $12\,\mathrm{s}$
horizon, mass
$1.8\,\mathrm{kg}$, gravitational acceleration
$9.81\,\mathrm{m}\,\mathrm{s}^{-2}$, and a $35\,\mathrm{N}$ force
limit. The horizontal velocity increases smoothly from zero to
$0.6\,\mathrm{m}\,\mathrm{s}^{-1}$ over the first $2\,\mathrm{s}$,
remains constant until $10\,\mathrm{s}$, and then decreases
symmetrically to zero. A $-4\,\mathrm{N}$ horizontal external force
acts from $4\,\mathrm{s}$ to $7\,\mathrm{s}$, with
$0.2\,\mathrm{s}$ half-cosine transitions.

The commanded force combines exact disturbance feedforward with
proportional--derivative position feedback having gain matrices
$\operatorname{diag}(3,10)$ and $\operatorname{diag}(4,6)$. For UA and
FA-opt, the desired pitch aligns the preferred body-force direction
with the commanded force and is limited to $45\,\mathrm{deg}$. The common
pitch dynamics use gains $35$ and $12$, with angular-velocity and
angular-acceleration limits of
$55\,\mathrm{deg}\,\mathrm{s}^{-1}$ and
$180\,\mathrm{deg}\,\mathrm{s}^{-2}$. The UA cost is the squared thrust magnitude. For the fully actuated
cases, the allocation map and actuator cost yield the zero-moment force
value
$\forceValue(\forceCoordinates)
=12(F_x^B)^2+1.15(F_z^B)^2$,
so force along the body $z$-axis is less costly than transverse force.
This illustrative force value excludes the body moment required for
attitude motion, consistently with the zero-moment scope of the theory.

Figure~\ref{fig:numerical-comparison} summarizes the results. The first
row shows that both fully actuated realizations preserve the prescribed
motion, while UA reaches approximately $0.27\,\mathrm{m}$ tracking
error following the changes in required-force direction. The attitude
panel attributes this error to the finite UA reorientation time:
FA-opt generates the corrective force immediately and subsequently
approaches the minimum-allocation attitude. In agreement with
\cref{eq:zero-transverse-force}, the UA transverse body-force
coordinate is zero to numerical precision. FA-opt uses transverse force
only during alignment transients, whereas FA-const uses it persistently.
The generated-force cost must be read together with the tracking error,
since a lower cost can result from failure to realize the commanded
force.
Using the same required-force trajectory for all three realizations, rather than their actual generated forces, gives cumulative allocation costs of $3787$ for UA, $4355$ for FA-opt, and $8159$ for FA-const.
Thus, the simulation illustrates the complementary conclusions of the theory:
minimum-direction alignment reduces allocation cost, while full
actuation permits immediate departure from that family when preserving
tracking requires transverse force.

\begin{figure}[t]
 \centering
 \includegraphics[width=\columnwidth]{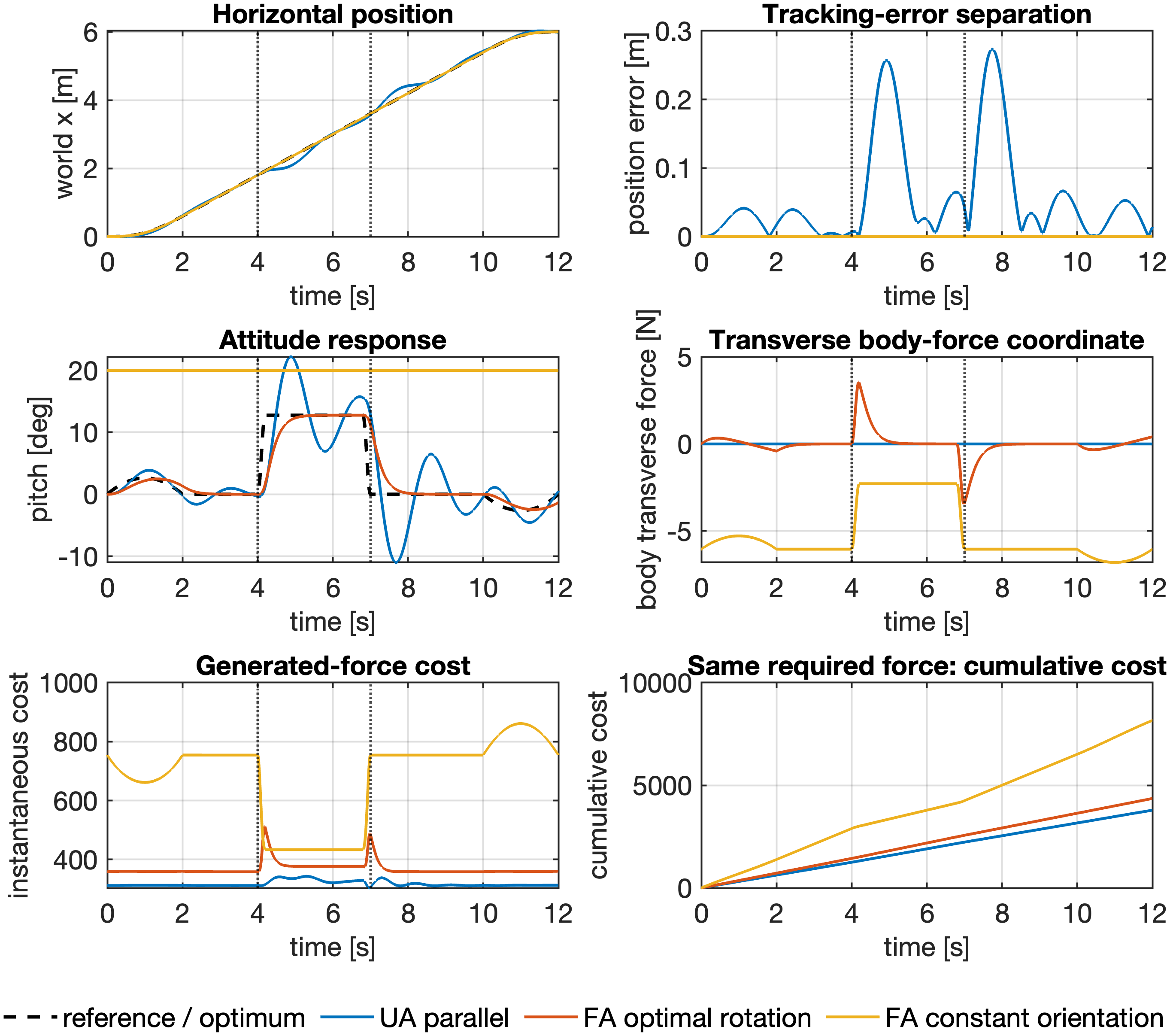}
 \caption{Planar comparison of the three actuation realizations. The
 dotted vertical lines delimit the external-force interval. The
 generated-force cost reflects the closed-loop force actually applied,
 whereas the cumulative cost evaluates the common force required by
 the prescribed motion and therefore isolates the allocation effect of
 attitude selection.}
 \label{fig:numerical-comparison}
\end{figure}

\section{Discussion and Conclusions}
\label{sec:discussion-conclusion}

This work distinguishes complete wrench capability from its
minimum-cost use. For a fixed actuation geometry, zero-moment
constraint, and actuator cost, the least costly body-force directions
are determined jointly. Positive homogeneity makes these directions
independent of force magnitude. Hence, for any prescribed position
trajectory, aligning a fixed minimum-allocation direction with the
required world-frame force confines the body force to a one-dimensional
subspace and eliminates its transverse component. Underactuated-like
force generation thus emerges from optimality rather than from limited
force authority.

The exact optimality-gap decomposition extends this result to full-pose
trajectories. Its minimum class contains precisely the trajectories
whose body-frame required-force direction remains in
$\minimumDirectionSet$ almost everywhere, while the accumulated gap
quantifies the additional allocation cost of every other attitude
trajectory.
For summed nonnegative actuator output, parallel thrust attains the
universal force-generation lower bound whenever alignment and zero
moment are feasible.

Full actuation retains a complementary role during changes in the
required-force direction. A fully actuated platform can immediately use
transverse force and subsequently recover minimum-cost alignment,
whereas a structurally underactuated platform must first reorient.
Nominal efficiency can therefore favor underactuated-like motion while
complete wrench capability supports tracking resilience.

The results concern instantaneous force allocation under a zero
body-moment constraint. Including the moment and actuator dynamics
required for attitude motion couples allocation cost to attitude
derivatives and requires trajectory-level optimization. Complete rigid-body dynamics, finite rotational
inertia, and nonhomogeneous physical-power models are left for future
work.

\bibliographystyle{IEEEtran}
\bibliography{refs}

\end{document}